\documentclass{article}
\usepackage{graphicx} 
\usepackage[a4paper, total={6in, 8in}]{geometry}
\usepackage{url}

\usepackage{amsfonts}
\usepackage{xcolor}
\usepackage{amsmath}
\usepackage{amsthm}

\usepackage{caption}

\usepackage{comment}

\usepackage{tikz}

\usepackage{hyperref}

\newcommand{\R}{\mathbb{R}}

\newtheorem{theorem}{Theorem}
\newtheorem{definition}{Definition}

\newtheorem{lemma}{Lemma}
\newtheorem{example}{Example}

\newcommand{\states}{\mathcal{S}}
\newcommand{\actions}{\mathcal{A}}
\newcommand{\observations}{O}

\title{General Agents Contain World Models, even under Partial Observability and Stochasticity}
\author{Santiago Cifuentes\footnote{Dovetail Research. Funded by the Advanced Research + Invention Agency (ARIA) through
project code MSAI-SE01-P005.}}
\date{\today{}}

\begin{document}

\maketitle

\begin{abstract}
     Deciding whether an agent possesses a model of its surrounding world is a fundamental step toward understanding its capabilities and limitations. In \cite{richens2025general}, it was shown that, within a particular framework, every \textit{almost optimal} and \textit{general} agent necessarily contains sufficient knowledge of its environment to allow an approximate reconstruction of it by querying the agent as a black box. This result relied on the assumptions that the agent is deterministic and that the environment is fully observable. 
     
     In this work, we remove both assumptions by extending the theorem to stochastic agents operating in partially observable environments. Fundamentally, this shows that stochastic agents cannot avoid learning their environment through the usage of randomization. We also strengthen the result by weakening the notion of generality, proving that less powerful agents already contain a model of the world in which they operate. 
\end{abstract}

\section{Introduction}

The capacity of humans to perform long horizon reasoning to complete tasks in complex, real-world environments is believed to rely heavily on rich mental representations of the world \cite{ha2018world,johnson1983mental}. Thus, it is believed that to reach human-level performance AI models will also need to have some internal representation of their surrounding environment. Explicitly model-based agents exist and have achieved impressive performance across many tasks and domains~\cite{hafner2023mastering,lecun2022path,wang2023voyager}, and even though model-free have been developed and are quite capable of generalizing across a wide range of tasks and environments~\cite{reed2022generalist,zitkovich2023rt}, there is an increasing amount of evidence that they learn implicit world models~\cite{li2022emergent}.


In this context, in \cite{richens2025general}, it was shown, under a particular framework, that from any general enough agent with long-horizon reasoning capabilities it is possible to construct a approximation of the underlying world in which the agent operates. To prove such a result, they showed that, given a capable enough agent that can be queried to perform a variety of tasks, it is possible to compute an estimator of the environment by inspecting the actions the agent takes to attain different goals. 

In their framework the world is modeled as a fully observable, finite and communicating Controlled Markov Decision Process, and the agent is seen as a deterministic goal-oriented policy. These two assumptions are exploited in the proof, and they somewhat weaken the result as real-world environments are often partially observable~\cite{kaelbling1998planning} and the use of randomization in policies is common, especially for simplifying computations of gradients and averages~\cite{sutton1999policy}.

Therefore, in this work we strengthen the result by removing these two hypotheses, extending it to partially observable environments and stochastic agents. In particular, we believe that the extension to stochastic agents is conceptually relevant, since our results show that randomized agents still have to learn a representation of their environment, although the quality of the world estimator that can be computed from them is worse than that obtained using deterministic agents. This somewhat resembles what happens in the context of randomized algorithms: it is common that some task can be solved easily and efficiently employing some stochastic strategy, while the corresponding efficient deterministic procedure requires a deeper understanding of the elements involved in the problem. This is the case for primality testing: the Miller-Rabin test~\cite{rabin1980probabilistic} can detect primality by exploiting well-known facts about modular arithmetic such as the Fermat's little theorem, while the deterministic AKS algorithm~\cite{agrawal2004primes} relies on a deeper understanding of polynomial rings and a generalization of Fermat's little theorem for them.

We also extend the original result by proving that less general agents already contain enough information to compute a model of the environment. More precisely, the original theorem shows that given an agent capable of achieving all goals of depth $n$ (where depth measures the horizon of reasoning required to achieve a goal, i.e. bigger depth corresponds to goals that require more planning) we can compute an estimator of the world with error $O(1/\sqrt{n})$. The amount of goals of depth $n$ grows doubly exponentially on $n$. We therefore improve the proof to only rely on depth-$n$ goals of \textit{width} 2 (where width measures how many different subgoals compose the main goal). In particular, we only require the agent to be capable of realizing a number of goals that grows singly exponential on $n$, and the estimator achieves an error of $O(\log n / n)$.

The rest of the paper is organized as follows. In Section~\ref{sec:basic_def} we present the main definitions from~\cite{richens2025general} (with a slightly different notation). In Section~\ref{sec:prev_res} we describe the original result alongside a summary of its proof. In Section~\ref{sec:new_res} we describe our results. Finally, in Section~\ref{sec:conclu} we detail some conclusions.  

\section{Basic definitions}\label{sec:basic_def}

We model the world as a Controlled Markov Decision Process.

\begin{definition}
    A \textbf{Controlled Markov Decision Process} (cMDP) is a 3-tuple $(\states, \actions, P)$ where $\states$ is a set of states, $\actions$ is a set of actions, and $P(\cdot | \cdot, \cdot):\states \times \actions \times \states \to [0,1]$ is a stochastic kernel satisfying $\sum_{s' \in \states} P(s'|s, a) = 1$ for all $(s, a) \in \states \times \actions$.
\end{definition}

Usually, a cMDP also includes a reward function and a discount factor, but we will not need them for the development of this work. We say that a cMDP $(\states, \actions, P)$ is \textbf{finite} if $|\states|,|\actions| < \infty$, and \textbf{communicating} if the directed graph $(\states, \{ss' : \exists a  \, \, P(s'|s, a) > 0\})$ is strongly connected. All cMDPs that we consider will be assumed to be both finite and communicating. Moreover, note that by definition the transition kernel does not depend on time (thus, we have defined \textit{stationary} cMDPs). 

\begin{example}\label{example:cMDP}
\normalfont
     Figure~\ref{fig:example_cMDP} shows an example of a finite cMDP with 5 states $\{s_{-2}, s_{-1}, s_0, s_1, s_2\}$ and two actions $\{L, R\}$ that correspond conceptually to left and right movements (except when the current state is the leftmost or rightmost, in which case it corresponds to a transition to $s_0$). Each edge corresponds to a non-zero transition probability: for example, the edge $s_0 \overset{R,p_R}{\longrightarrow} s_1$ indicates that $P(s_1|s_0,R) = p_R$. Observe that each action has a chance to ``fail'' and in that case no transition is performed. This is represented by the self-loops on each node.
    
    The cMDP is communicating under the assumption that $p_R,p_L > 0$. For example, the sequence of actions $RRRR$ has positive probability of transitioning state $s_{-2}$ to $s_2$: the exact value is $p_R^4$.
\end{example}

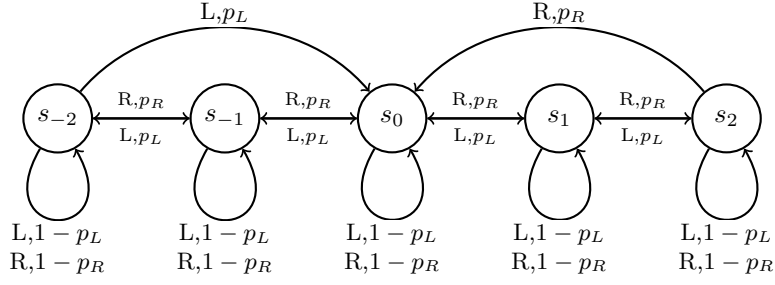
\begin{figure}[ht]
    \centering
        \begin{tikzpicture}[
            node/.style={circle, draw, thick, minimum size=9mm},
            edge/.style={->, thick},
            lab/.style={font=\small, fill=white, inner sep=1pt},
            scale=1.1
        ]
        
        \node[node] (sm2) at (-4,0) {$s_{-2}$};
        \node[node] (sm1) at (-2,0) {$s_{-1}$};
        \node[node] (s0)  at (0,0)  {$s_{0}$};
        \node[node] (s1)  at (2,0)  {$s_{1}$};
        \node[node] (s2)  at (4,0)  {$s_{2}$};
        
        \draw[edge] (sm2) -- (sm1) node[midway, above] {\scriptsize R,$p_R$};
        \draw[edge] (sm1) -- (s0)  node[midway, above] {\scriptsize R,$p_R$};
        \draw[edge] (s0)  -- (s1)  node[midway, above] {\scriptsize R,$p_R$};
        \draw[edge] (s1)  -- (s2)  node[midway, above] {\scriptsize R,$p_R$};
        
        \draw[edge] (sm1) -- (sm2) node[midway, below] {\scriptsize L,$p_L$};
        \draw[edge] (s0)  -- (sm1) node[midway, below] {\scriptsize L,$p_L$};
        \draw[edge] (s1)  -- (s0)  node[midway, below] {\scriptsize L,$p_L$};
        \draw[edge] (s2)  -- (s1)  node[midway, below] {\scriptsize L,$p_L$};
        
        \draw[edge, bend left=50] (sm2) to node[midway, above, lab] {L,$p_L$} (s0);
        \draw[edge, bend right=50] (s2)  to node[midway, above, lab] {R,$p_R$} (s0);
        
        \draw[edge, looseness=8] (sm2) to[out=-120,in=-60] node[below, lab] {\shortstack{L,$1-p_L$\\R,$1-p_R$}} (sm2);
        \draw[edge, looseness=8] (sm1) to[out=-120,in=-60] node[below, lab] {\shortstack{L,$1-p_L$\\R,$1-p_R$}} (sm1);
        \draw[edge, looseness=8] (s0)  to[out=-120,in=-60] node[below, lab] {\shortstack{L,$1-p_L$\\R,$1-p_R$}} (s0);
        \draw[edge, looseness=8] (s1)  to[out=-120,in=-60] node[below, lab] {\shortstack{L,$1-p_L$\\R,$1-p_R$}} (s1);
        \draw[edge, looseness=8] (s2)  to[out=-120,in=-60] node[below, lab] {\shortstack{L,$1-p_L$\\R,$1-p_R$}} (s2);
    
    \end{tikzpicture}
    
    \caption{Example of a cMDP with states $\states = \{s_{-2}, s_{-1}, s_0, s_1, s_2\}$ and $\actions = \{L, R\}$ where $p_R,p_L \in (0,1)$ are arbitrary probabilities.}
    \label{fig:example_cMDP}
\end{figure}

\newcommand{\history}{\mathbb{H}}

From now on, we fix a set of states $\states$ and actions $\actions$. A \textbf{history} is an infinite sequence $(s_0,a_0),(s_1,a_1),\ldots$ of state-action pairs, and we denote by $\history$ the set of all histories. For simplicity, we may write a history as the sequence $s_0a_0s_1a_1\ldots$ ignoring the pairing of states and actions. We use a fragment of LTL (Linear Temporal Logic) to describe subsets of desired histories. We refer to these formulas as \textit{goals}.

\begin{definition}
    A \textbf{basic goal} is a formula of the form $\mathcal{O}[V]$ where $\mathcal{O} \in \{\top,\bigcirc,\Diamond\}$ and $V \subseteq \states \times \actions$. \textbf{Sequential goals} are given by finite sequences of basic goals as
    \[
    \langle \varphi_1, \varphi_2, \ldots, \varphi_n\rangle
    \]
    where each $\varphi_i$ is a basic goal. Finally, (general) \textbf{goals} are a disjunction of sequential goals, as
    \[
    \bigvee_{i \in I} \psi_i
    \]
    where each $\psi_i$ is a sequential goal and $|I| < \infty$. We denote by $\Psi$ the set of all goals.
\end{definition}

We will employ some minor abuses of notation to resemble the notation of random variables. For example, we will write the goal $\bigcirc[\{s\} \times A]$ as $\bigcirc[\states=s]$, and similarly we write $\Diamond[A \neq a]$ for goal $\Diamond[\states \times (\actions \setminus \{a\})]$.

Whenever a history $h \in \history$ satisfies some formula $\varphi$, we denote it as $h\models \varphi$. The semantics for the basic goals given a history $h=s_0a_0s_1a_1\ldots $ are:
\begin{align*}
    h \models \top[V] & \iff (s_0,a_0) \in V\\
    h \models \bigcirc[V] & \iff (s_1,a_1) \in V\\
    h \models \Diamond[V] & \iff \exists i \in \mathbb{N} :(s_i,a_i) \in V
\end{align*}
Intuitively, the $\top$ operator specifies how the history should start, the $\bigcirc$ operator indicates what should be the next state-action pair and $\Diamond$ asks that the history must eventually visit some configuration.

For sequential goals, the semantics are described on a case-by-case basis:
\begin{align*}
    h = s_0 a_0 \ldots \models \langle \varphi_1,\ldots, \varphi_n\rangle \iff \begin{cases}
        (s_0,a_0) \in V \, \wedge h \models \langle \varphi_2,\ldots,\varphi_n\rangle & \varphi_1 = \top[V]\\
        (s_1,a_1) \in V \, \wedge s_1a_1s_2a_2\ldots \models \langle \varphi_2,\ldots,\varphi_n\rangle & \varphi_1 = \bigcirc[V]\\
        (s_i,a_i) \in V \,\wedge s_ia_i\ldots \models \langle \varphi_2,\ldots,\varphi_n\rangle & \varphi_1 = \Diamond[V]
    \end{cases}
\end{align*}
where in the case of the $\Diamond$ operator $i$ is the least index such that $(s_i,a_i) \in V$ (if there is no such index, then the goal is not satisfied). Finally, disjunction is defined naturally as $h \models  \psi_1 \vee \psi_2 \iff h \models \psi_1 \vee h \models \psi_2$.

The \textbf{depth} of a goal is defined as the length of the longest sequential goal appearing in it. Formally,
\begin{align*}
    &depth(\langle \varphi_1,\ldots,\varphi_n\rangle) = n\\
    &depth\left(\bigvee_{i \in I} \psi_i\right) = \max_{i \in I}\{depth(\psi_i)\}
\end{align*}
\newcommand{\finiteHistories}{\mathbb{H}_{\mathcal{F}}}
We denote by $\Psi_n$ the set of formulas of depth $n$.

\begin{example}\label{example:goals}
    \normalfont
    Consider the cMDP of Example~\ref{example:cMDP}. The history $s_0 R s_1 L s_0 R s_1 L s_0\ldots $ satisfies the basic goal $\top[\states = s_0]$ (the history starts at $s_0$) as well as $\bigcirc[\actions = L]$ (the second action taken is $L$) and $\Diamond[\{(s_0, R)\}]$ (the state-action pair $(s_0, R)$ is eventually visited). It does not satisfy the basic goal $\Diamond[\states = s_{-1}]$.
    
    Regarding sequential goals, it satisfies $\langle \top[\actions = R], \Diamond[\states = s_1], \bigcirc[\states = s_0]\rangle$ which states that the history's first action must be $R$, and then that it must eventually reach state $s_1$ and at the next step visit state $s_0$. It does not satisfy $\langle \Diamond[\actions = L], \bigcirc[\actions = L] \rangle$ because the first time it performs action $L$ is at the second step and an the third step the action taken is $R$. 
\end{example}

A finite history is a finite sequence $s_0a_0\ldots s_{n-1}a_{n-1} s_{n}$ of states and actions that starts and ends with a state. We denote the set of finite histories as $\finiteHistories$. A \textbf{policy} is a map $\pi: \finiteHistories \to \Delta(\actions)$, where $\Delta(\actions)$ denotes the set of probability distributions with support on $\actions$. Intuitively, a policy indicates, for each finite history, the next action that the agent wants to take, or, more in general, a distribution over the next possible action. We say that a policy is \textbf{deterministic} if for every $h \in \finiteHistories$ there is some $a \in \actions$ such that $\Pr\left(\pi(h) =a\right)=1$ (i.e. the distribution is concentrated in a single action for every finite history). We say that a policy is \textbf{markovian} if its distribution for the next action only depends on the last visited state. More formally, a policy is markovian if for every $h = s_0a_0\ldots s$ and $h' = s_0'a_0'\ldots s$ it holds that $\pi(h) = \pi(h')$.

A \textbf{goal-conditioned} policy is a mapping $\pi: \finiteHistories \times \Psi \to \Delta(\actions)$ such that for each $\psi \in \Psi$ the mapping $\pi(\cdot, \psi)$ is a policy. Goal-conditioned policies extend policies by allowing them to take different actions according to the requested goal. The notation $\pi_{\psi}$ will come in handy to represent the policy that is obtained by conditioning $\pi$ on $\psi$, i.e. $\pi_{\psi}(h) = \pi(h, \psi)$.

\begin{example}\label{example:policy}
    \normalfont
    Consider again the cMDP from Example~\ref{example:cMDP}. The policy $\pi_1$ given by
    \begin{align*}
        &\Pr\left(\pi_1(s_0a_0\ldots s_{i}) = R\right) = 1&\text{if } i \in \{-2,-1,0\}
        \\
        &\Pr\left(\pi_1(s_0a_0\ldots s_{i}) = L\right)  = 1&\text{if } i \in \{1,2\}
    \end{align*}
    attempts to always stay at $s_0$. It is deterministic and markovian. Another policy that also attempts to stay at $s_0$ is $\pi_2$ given by
    \begin{align*}
        &\Pr\left(\pi_2(s_0a_0\ldots s_{i}) = R\right) = 1&\text{if } i \in \{-1,0, 2\}
        \\
        &\Pr\left(\pi_2(s_0a_0\ldots s_{i}) = L\right)  = 1&\text{if } i \in \{-2, 1\}
    \end{align*}
    This policy tries to exploit the edges going directly from $s_{-2}$ and $s_2$ to $s_0$.
\end{example}

Given a cMDP with transition kernel $P$, a policy $\pi$ and an initial state $s$ we assign a probability to every finite history $h  = s_0a_0\ldots s_n' \in \finiteHistories$ as
\begin{align*}
    \Pr\left(h|\pi, s_0\right) =\begin{cases}
        \prod_{i=0}^{n-1} \Pr\left(\pi(h_i) = a_i\right) P(s_{i+1}|s_i, a_i) & s = s_0\\
        0 & \text{otherwise}
    \end{cases}
\end{align*}
where $h_i = s_0a_0 \ldots s_i$. Note that this induces a measure over $\finiteHistories$ for each $\pi$ and $s$, and we can extend this measure to $\history$ in the standard way via cylinder sets.

\begin{example}\label{example:probabilities}
    \normalfont
    Consider the policies $\pi_1$ and $\pi_2$ from Example~\ref{example:policy} and the finite history $h= s_0 R s_1 L s_0$. It holds that
    \begin{align*}
        \Pr\left( h |\pi_1, s_0\right) = \Pr\left( h |\pi_2, s_0\right) = p_Rp_L
    \end{align*}
\end{example}

We define the probability of a goal $\psi$ being achieved by a policy $\pi$ from state $s$ via induced measures as
\begin{align*}
    \Pr\left(\psi | \pi, s \right) = \sum_{\substack{h \in \history\\h \models \psi}} \Pr\left(h | \pi, s\right)
\end{align*}

We say that two goals $\varphi_1$ and $\varphi_2$ are \textbf{incompatible} if $h \models \varphi_1$ implies that $h \not\models \varphi_2$, and similarly that $h \models \varphi_2 \implies h \not\models \varphi_1$. We will use the following property regarding incompatible goals.
\begin{lemma}\label{lemma:disjoint_goals_sum_probs}
    Let $\varphi_1$ and $\varphi_2$ be two incompatible goals. Then,
    \begin{align*}
         \Pr\left(\varphi_1\vee \varphi_2 | \pi, s\right) = \Pr\left(\varphi_1 | \pi, s\right) +  \Pr\left(\varphi_2 | \pi, s\right)
    \end{align*}

    for any $s\in\states$. In particular, this implies that
    \begin{align*}
        \max_{\pi} \Pr\left( \varphi_1\vee \varphi_2|\pi,s \right) \leq \max_{\pi} \Pr\left(\varphi_1 | \pi,s\right) + \max_{\pi} \Pr\left(\varphi_2 | \pi,s\right)
    \end{align*}
\end{lemma}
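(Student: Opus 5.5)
The plan is to reduce the first identity to additivity of the measure $\Pr(\cdot\mid\pi,s)$ on $\history$. Fix a policy $\pi$ and a state $s$. For a goal $\varphi$, write $E_\varphi = \{h \in \history : h \models \varphi\}$ for the set of histories satisfying it, so that $\Pr(\varphi\mid\pi,s)$ is the measure of $E_\varphi$ under the measure induced by $\pi$ and $s$ via cylinder sets.

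The first step is to check that each $E_\varphi$ is measurable. Satisfaction of a basic goal $\top[V]$ or $\bigcirc[V]$ depends on finitely many coordinates, so the corresponding set is a finite union of cylinders. For $\Diamond[V]$ and for sequential goals, the set is a countable union over the first-hitting indices; each such piece is determined by a finite prefix and is therefore a cylinder. A finite disjunction is a finite union of these. So $E_\varphi$ lies in the $\sigma$-algebra generated by cylinders, and the sum in the definition of $\Pr(\psi\mid\pi,s)$ is really the measure of this set.

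The second step uses the semantics of disjunction, $h\models\varphi_1\vee\varphi_2 \iff h\models\varphi_1 \text{ or } h\models\varphi_2$, which gives $E_{\varphi_1\vee\varphi_2}=E_{\varphi_1}\cup E_{\varphi_2}$. Incompatibility says exactly that $E_{\varphi_1}\cap E_{\varphi_2}=\emptyset$. Finite additivity of the measure then yields $\Pr(\varphi_1\vee\varphi_2\mid\pi,s)=\Pr(\varphi_1\mid\pi,s)+\Pr(\varphi_2\mid\pi,s)$, which holds for every $s\in\states$.

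For the ``in particular'' part, we bound each term on the right by its supremum over policies, so that for every $\pi$,
\[
\Pr(\varphi_1\vee\varphi_2\mid\pi,s)\le \sup_{\pi'}\Pr(\varphi_1\mid\pi',s)+\sup_{\pi'}\Pr(\varphi_2\mid\pi',s).
\]
Taking the supremum over $\pi$ on the left gives the claim, with $\max$ read as $\sup$ (all quantities lie in $[0,1]$, so these suprema exist).

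The only point needing care is the measurability in the first step. The argument there is standard, since satisfaction of each goal is witnessed by a finite prefix. No other step poses any real difficulty.
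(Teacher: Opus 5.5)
Your proposal is correct and follows the paper's route: both split $\Pr(\varphi_1\vee\varphi_2\mid\pi,s)$ over the histories satisfying $\varphi_1$ and those satisfying $\varphi_2$, which are disjoint by incompatibility. Your measurability check and your supremum argument for the ``in particular'' part make precise what the paper leaves implicit.
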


\begin{proof}
    Fix some policy $\pi$ and note that:
    \begin{align*}
        \Pr\left(\varphi_1\vee \varphi_2 | \pi,s\right) &= \sum_{\substack{h \in \history\\h\models \varphi_1\vee \varphi_2}} \Pr\left( h|\pi,s\right)\\
        &= \sum_{\substack{h \in \history\\h\models \varphi_1}} \Pr\left( h|\pi,s\right) + \sum_{\substack{h \in \history\\h\models  \varphi_2}} \Pr\left( h|\pi,s\right)\\
        &= \Pr\left(\varphi_1 | \pi,s\right) + \Pr\left(\varphi_2| \pi,s\right)
    \end{align*}
    where we used the hypothesis over $\varphi_1$ and $\varphi_2$ in the second equality.
\end{proof}

\newcommand{\repeatpair}{R}

%
%

The next definitions aim to capture the set of policies that have some capacity for general reasoning.
\begin{definition}
    We say that a goal-conditioned policy $\pi^*$ is \textbf{$\delta$-optimal} for some subset of goals $\Phi \subseteq \Psi$ if for all $\varphi \in \Phi$ and $s\in \states$
    \begin{align*}
        (1-\delta)\max_{\pi} \{\Pr\left( \varphi | \pi, s \right)\} \leq \Pr\left( \varphi | \pi_{\varphi}^*, s\right)
    \end{align*}

    An agent is \textbf{$n$-th-degree $\delta$-optimal} if it is $\delta$-optimal for $\Psi_n$.
\end{definition}

\begin{example}\label{example:optimal-policy}
    \normalfont
    Consider again the policies $\pi_1$ and $\pi_2$ from Example~\ref{example:policy}, and the goal $\varphi = \Diamond[\states = s_0]$. It holds that
    \begin{align*}
        \Pr\left(\varphi | \pi_1, s\right) = \Pr\left(\varphi | \pi_2, s\right) = 1    
    \end{align*}
    for every starting state $s$.

    Now, consider the goal $\psi = \langle \Diamond[\states \in \{s_{-2}, s_2\}], \bigcirc[\states = s_0]]\rangle$ which states that the agent must reach one of the states $s_{-2}$ or $s_2$ and then go to $s_0$. We can consider two policies $\pi_3$ and $\pi_4$ that attempt to do that as
    \begin{align*}
    \Pr\left(\pi_3(\cdot) = L\right) = 1\\
    \Pr\left(\pi_4(\cdot) = R\right) = 1
    \end{align*}
    where $\pi_3$ always moves left, while $\pi_4$ always moves right. It can be seen that
    \begin{align*}
        \Pr\left(\psi | \pi_3, s_0\right) = p_L\\
        \Pr\left(\psi  | \pi_4, s_0 \right) = p_R
    \end{align*}
    Therefore, depending on the values of $p_L$ and $p_R$, one may be better than the other. Moreover, it can be shown that one of these policies is optimal for $\psi$ (i.e. if $p_L \geq p_R$ then $\pi_3$ is optimal and $\pi_4$ is optimal in the other case).

    Therefore, for $\Phi = \{\varphi, \psi\}$ it holds that the goal-conditioned policy $\pi$ given by $\pi_{\varphi} = \pi_1$ and $\pi_{\psi} = \pi_3$ is optimal for $\Phi$ if $p_L \geq p_R$. Furthermore, if $p_L < p_R$ but $(1-\delta)p_R \leq p_L$ for some $\delta \in (0, 1)$ it holds that $\pi$ is $\delta$-optimal for $\Phi$.
\end{example}

\section{Previous result}\label{sec:prev_res}

For the remainder of the paper, we fix an initial state $s_0$. In \cite{richens2025general}[Theorem 1] the following results is shown:

\begin{theorem}\label{teo:deterministic_policies_induce_transitions}
    Let $(\states, \actions, P)$ be a cMDP and $\pi$ an $n$-th-degree $\delta$-optimal goal-conditioned deterministic policy. Then, for any $s,s' \in \states$ and $a \in \actions$ it is possible to define an estimator $\hat{P}(s'|s, a)$ by querying the policy $\pi$ such that
    \begin{align*}
        |\hat{P}(s'|s,a) -P(s'|s,a)| \leq \sqrt{\frac{2 P(s'|s,a)(1-P(s'|s,a))}{(n-1)(1-\delta)}}
    \end{align*}
\end{theorem}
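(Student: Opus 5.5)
This is the Richens et al. argument. The plan is to encode an estimate of $p := P(s'|s,a)$ into a binary choice that the agent must make. We pick a goal for which the optimal first action at $s$ depends on whether $p$ lies above or below a threshold $k/n$, and then read $\hat P$ off the threshold where the agent's choice flips.

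Fix $s,s',a$ and some $b \neq a$. For integers $0 \le k \le n-1$ we consider two families of outcomes over $n-1$ repetitions of the ``trial'': go to $s$, take $a$, observe whether the next state is $s'$. By communication the agent can always return to $s$, so we may assume each trial is reachable.

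\begin{itemize}
\item $A_k$ is the outcome ``at most $k$ successes among the $n-1$ trials''.
\item $B_k$ is the outcome ``more than $k$ successes''.
\end{itemize}

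Each specific success/failure pattern is a sequential goal built from $\Diamond[\{(s,a)\}]$ followed by $\bigcirc[\states=s']$ or $\bigcirc[\states\neq s']$. We prefix it with a first-step marker $\top[\actions=a]$ versus $\top[\actions=b]$ and take the disjunction over patterns. The resulting goal $\psi_k$ has depth $O(n)$, and after rescaling the indices it has depth $n$. It asks: take $a$ first and then realize $A_k$, or take $b$ first and then realize $B_k$.

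The patterns are pairwise incompatible, since the first differing trial gives contradictory $\bigcirc$ requirements. So Lemma~\ref{lemma:disjoint_goals_sum_probs} turns the satisfaction probability into a sum. Moreover, because the repetition is forced to pass through $(s,a)$, whenever the agent reaches the trials the outcome counts are exactly Binomial$(n-1,p)$. Hence the optimal value of the $a$-branch is proportional to $F(k) = \Pr(\mathrm{Bin}(n-1,p)\le k)$, and that of the $b$-branch to $1-F(k)$, with the same reachability factor. The factor cancels if the agent starts at $s_0$ and the marker is placed so that both branches share it.

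Now use determinism and $\delta$-optimality. If the deterministic agent chooses $a$ first on $\psi_k$, it achieves at most $F(k)$ times that factor. Then $(1-\delta)\max\{F(k),1-F(k)\}\le F(k)$, which gives
\[
1-F(k) \le \frac{F(k)}{1-\delta}.
\]
The symmetric inequality holds if it chooses $b$. We query all $k$, let $k^*$ be the switching point, and set $\hat P = k^*/(n-1)$. The choice tells us on which side of roughly $1/2$, up to a $(1-\delta)$ factor, the binomial tail lies. So $k^*$ is an approximate median of Bin$(n-1,p)$, up to the factor distortion.

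The main obstacle is turning this into the stated quantitative bound. The plan is as follows. Suppose $|k/(n-1)-p| > \epsilon$ with $k$ below the mean. Chebyshev gives
\[
F(k) \le \Pr\bigl(|X-(n-1)p| \ge (n-1)\epsilon\bigr) \le \frac{p(1-p)}{(n-1)\epsilon^2}.
\]
Optimality forces $F(k) \ge (1-\delta)\max(F,1-F) \ge (1-\delta)/2$. Combining the two yields
\[
\epsilon^2 \le \frac{2p(1-p)}{(n-1)(1-\delta)},
\]
which is exactly the claimed bound, and the symmetric case is handled the same way.

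Two points need care. The first is the monotonicity of the choice in $k$. If the choice is not monotone, we can pick any switching index, since each side separately satisfies the Chebyshev argument. The second is the depth bookkeeping: the sequential goal uses $2(n-1)$ basic goals, so it must be packaged as one $\langle\cdot\rangle$ whose length matches the depth allowed in the statement. Together these pin down the exact $(n-1)$ appearing in the statement.
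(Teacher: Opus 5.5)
Your core argument is the one the paper sketches for this cited result. It uses the dichotomic goal $\xi_k=\psi_{a,k}\vee\chi_{b,k}$, and Lemmas~\ref{lemma:disjoint_goals_sum_probs} and~\ref{lemma:prob_of_rho_depends_on_p} give the branch values $F(k)=\Pr(X\le k)$ and $1-F(k)$ with $X\sim\mathrm{Bin}(m,p)$ for $m$ trials. The deterministic first action then reveals which tail dominates up to the factor $1-\delta$, and you finish with a switching index and Chebyshev at level $(1-\delta)/2$. That part is sound and yields $\epsilon_0=\sqrt{2p(1-p)/(m(1-\delta))}$.

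The step that fails as written is the depth bookkeeping. In this paper the depth of a goal is the number of basic goals in its longest sequential component. So $m$ trials plus the $\top$ marker have depth $2m+1$, which is exactly why Theorem~\ref{teo:non-deterministic-policies-induce-transitions} assumes $(2n+1)$-th-degree optimality in order to use $n$ trials. No ``rescaling of indices'' turns a goal with $n-1$ trials into a depth-$n$ goal. Optimality on $\Psi_n$ only licenses $m=\lfloor (n-1)/2\rfloor$ trials, and your argument then gives a bound at least $\sqrt{2}$ times the displayed one. The $(n-1)$ in the statement counts trials, so it is recovered only if depth is measured in $(\Diamond,\bigcirc)$ blocks plus the initial marker. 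You should adopt that convention explicitly rather than claim that the packaging pins down $(n-1)$. The paper's own sketch, whose $\varphi_{b,w}$ with $w\in\{0,1\}^n$ has depth $2n+1$, leaves the same mismatch unresolved.

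Three smaller repairs are needed.

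First, justify the branch values by the unconditional pattern-wise bound $\Pr(\varphi_{b,w}\mid\pi,s_0)\le p^{|w|_1}(1-p)^{m-|w|_1}$ for every $\pi$, as in Lemma~\ref{lemma:prob_of_rho_depends_on_p}. Saying the counts are binomial once the trials are reached does not work: conditionally on completing the trials they need not be, since the agent may decide whether to return to $(s,a)$ based on earlier outcomes.

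Second, the reachability factor is exactly $1$, because a finite communicating cMDP admits a policy reaching $s$ almost surely from any state. Nothing needs to cancel, and if the factor were not $1$, the factors after first actions $a$ and $b$ need not agree.

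Third, to avoid an extra $1/m$, let $k^*$ be the smallest $k$ at which the agent plays $a$; it exists since $F(m)=1$ forces $a$.
\begin{itemize}
\item The $a$-choice at $k^*$ gives $\Pr(X\le k^*)\ge(1-\delta)/2$, hence $p-k^*/m\le\epsilon_0$.
\item The $b$-choice at $k^*-1$ gives $\Pr(X\ge k^*)\ge(1-\delta)/2$, hence $k^*/m-p\le\epsilon_0$.
\end{itemize}
This choice of $k^*$ also disposes of non-monotonicity.
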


This result is obtained by considering a family of goals that encode information on $P(s'|s,a)$. Given a binary sequence $w\in \{0,1\}^n$, consider the goal
\begin{align*}
    \varphi_{b,w} = \langle \varphi_1^b, \varphi_2, \varphi_3^{w_1},\varphi_2,\varphi_3^{w_2},\ldots, \varphi_2,\varphi_3^{w_n}\rangle
\end{align*}
where $\varphi_1^b = \top[\actions=b]$, $\varphi_2 = \Diamond[\states=s, \actions=a]$, $\varphi_3^0 = \bigcirc [S\neq s']$ and $\varphi_3^{1} = \bigcirc[\states= s']$. Intuitively, this goal states that the first action taken must be $b$, that the state-action pair $(s, a)$ must be visited $n$ times and that at the $i$-th visit the next state must be $s'$ if $w_i = 1$ and different from $s'$ if $w_i=0$. Given this goal, we can build
\begin{align}\label{eq:def_of_rho}
    \rho_{b, r} = \bigvee_{\substack{w\in\{0,1\}^n\\ |w|_1 = r}} \varphi_{b, w}
\end{align}
where $|w|_1 = |\{ i : w_i = 1\}|$. This goal enforces that the first action taken must be $b$, and that in the next $n$ visits of the state-action pair $(s,a)$ it must be the case that in $r$ of them the next visited state is $s'$. The following lemma is a key component of the proof:

\begin{lemma}[Proven in Lemma 6 of \cite{richens2025general}]\label{lemma:prob_of_rho_depends_on_p}
    There exists a policy $\pi^*$ such that, for every $b$ and $w \in \{0,1\}^n$, it holds that
    \begin{align}\label{eq:prob_phi_w}
        \max_{\pi}\Pr\left( \varphi_{b,w}|\pi, s_0 \right) = \Pr\left(\varphi_{b,w}|\pi^*, s_0\right) = P(s'|s,a)^{|w|_1}(1-P(s'|s, a))^{n-|w|_1}
    \end{align}
    In particular, this implies that
    \begin{align*}
        \max_{\pi} \Pr\left( \rho_{b, r}|\pi, s_0 \right) = \Pr\left( \rho_{b,r} | \pi^*, s_0 \right) = \binom{n}{r}P(s'|s,a)^r(1-P(s'|s,a))^{n-r} 
    \end{align*}
\end{lemma}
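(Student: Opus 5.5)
We prove both parts by exhibiting an explicit strategy and a matching upper bound. Write $p = P(s'|s,a)$. The approach has three steps.

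\textbf{Step 1 (the policy $\pi^*$).} Let $\pi^*$ do the following:
\begin{enumerate}
\item take action $b$ at the first step;
\item from then on, whenever the current state is $s$, play $a$;
\item in any other state, follow a fixed deterministic markovian policy that reaches $s$ with probability $1$.
\end{enumerate}
The policy in item 3 exists because the cMDP is finite and communicating: for each state $t \neq s$, fix an action lying on some positive-probability path toward $s$. The resulting chain then hits $s$ almost surely from every state, a standard absorbing-chain argument.

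\textbf{Step 2 (achievability).} We compute $\Pr\left(\varphi_{b,w}|\pi^*,s_0\right)$. Under $\pi^*$ the pair $(s,a)$ is visited infinitely often almost surely. By the strong Markov property, the state following each visit is an independent draw from $P(\cdot|s,a)$. The goal reads the first $n$ visits to $(s,a)$ after the initial pair; we take $\varphi_2 = \Diamond$ to be allowed to match at the current index, consistent with the semantics above. The goal requires exactly the pattern $w$ on these visits, so its probability is $p^{|w|_1}(1-p)^{n-|w|_1}$.

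One subtlety is the case $s_0 = s$ with $b \neq a$. Then the first pair $(s_0,b)$ is not $(s,a)$, so the first $\Diamond$ simply waits for a later visit, and the computation is unchanged.

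\textbf{Step 3 (upper bound).} This is the main step. We must show that no policy, even history-dependent and randomized, achieves more than $p^{|w|_1}(1-p)^{n-|w|_1}$. We argue by induction on the number of remaining $(\varphi_2,\varphi_3)$ blocks. The claim to prove is: for any finite history prefix at which $k$ blocks remain, the probability of satisfying the remaining suffix goal is at most $\prod_{i>n-k} q_i$, where $q_i = p$ if $w_i = 1$ and $q_i = 1-p$ otherwise.

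For the inductive step, condition on the first time $\tau$ at which $(s,a)$ is played, which is what $\Diamond[\states=s,\actions=a]$ detects. If $\tau = \infty$, the goal fails. On $\{\tau<\infty\}$, the next state is distributed as $P(\cdot|s,a)$ independently of the past. The $\bigcirc$ condition therefore holds with probability exactly $q_i$. After that, the continuation satisfies the remaining goal with probability at most the inductive bound, regardless of history. Summing over $\tau$ gives at most $\Pr(\tau<\infty)\, q_i \prod_{j>i} q_j \le \prod_{j\ge i} q_j$.

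A technical point needs care: the $\bigcirc$ step also constrains the action $a_{\tau+1}$ only through $\states$, and the next $\Diamond$ may fire at index $\tau+1$ itself. The induction must therefore be phrased on the suffix starting at index $\tau+1$, with the state already sampled. This is consistent because the bound is history-independent.

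\textbf{The statement for $\rho_{b,r}$.} For $w \neq w'$ with $|w|_1 = |w'|_1 = r$, the goals $\varphi_{b,w}$ and $\varphi_{b,w'}$ are incompatible. The visit times are determined by the history, so a history fixes the outcome pattern, which cannot equal both $w$ and $w'$. By the equality part of Lemma~\ref{lemma:disjoint_goals_sum_probs}, applied repeatedly,
\[
\Pr\left(\rho_{b,r}|\pi,s_0\right) = \sum_{|w|_1=r} \Pr\left(\varphi_{b,w}|\pi,s_0\right)
\]
for every policy $\pi$. Each summand is bounded as in Step 3, with equality at $\pi^*$. Hence the maximum over $\pi$ equals $\binom{n}{r}p^r(1-p)^{n-r}$ and is attained by the same $\pi^*$, since $\pi^*$ does not depend on $w$.
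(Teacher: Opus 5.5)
Your proposal is correct and follows the paper's route. It uses the same $\pi^*$ (play $b$ first, then head to $s$ and play $a$ there), the same i.i.d.-outcomes computation for achievability, and the same incompatibility-plus-Lemma~\ref{lemma:disjoint_goals_sum_probs} step for $\rho_{b,r}$. Your Step~3 goes further than the paper: it supplies a valid matching upper bound over all history-dependent randomized policies, which the paper only asserts by deferring to \cite{richens2025general}.

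One small fix is needed in Step~1. At each $t\neq s$, choose the first action of a \emph{shortest} positive-probability path to $s$, so that every step has positive probability of decreasing the distance to $s$. Merely fixing some action that lies on a path to $s$ can produce a cycle that never reaches $s$.
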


\begin{proof}
    For the first part of the statement, we will only describe informally the policy proposed in the original proof. The second part of the statement follows directly from the first one and Lemma~\ref{lemma:disjoint_goals_sum_probs}.

    The policy $\pi^*$ starts by taking action $b$ in the first step. Then, for every subsequent step it always takes an action that gets it ``closer'' to $s$, on which it will always perform action $a$. It can be proven that, because the environment is finite and communicating, there exists a deterministic way of picking the actions of $\pi^*$ in order to guarantee that the state $s$ will be reached with probability 1 (see Lemma 1 from \cite{richens2025general}). Note that once the policy $\pi^*$ reaches the state-action pair $(s, a)$ with probability $P(s'|s,a)$ the next visited state is $s'$. Therefore, by looking at the number of $1$s in $w$ we can compute the probability of success of $\pi^*$ and obtain the value from Eq.~\eqref{eq:prob_phi_w}.
\end{proof}

The policy described in the proof of Lemma~\ref{lemma:prob_of_rho_depends_on_p} is shown conceptually in Figure~\ref{fig:proof_strat}. Each time the policy reaches state $s$ it transitions with probability $p = P(s'|s, a)$ to $s'$. Then, it returns to $s$ with probability 1.

\begin{figure}[ht]
    \centering
    \includegraphics[width=0.5\linewidth]{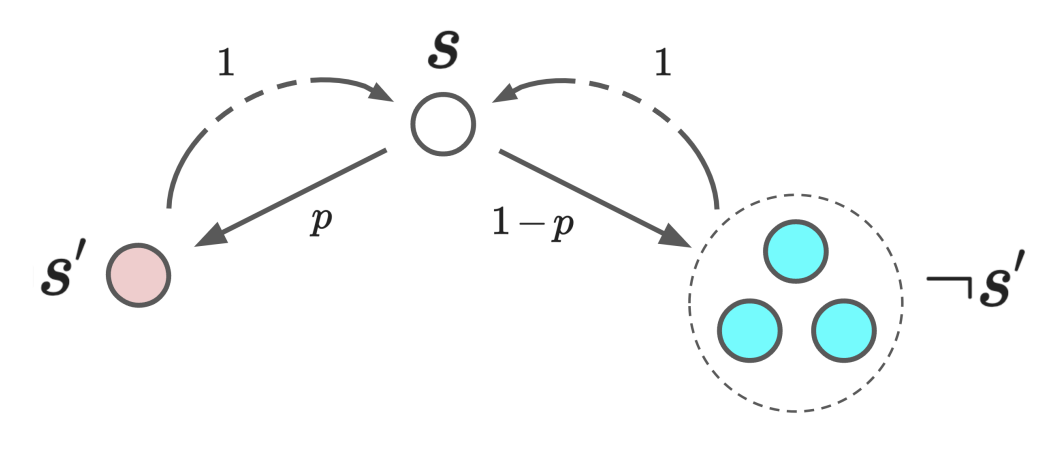}
    \caption{Diagram representing conceptually the policy $\pi^\star$ described in the proof of Lemma~\ref{lemma:prob_of_rho_depends_on_p} with $p=P(s'|s,a)$. This figure was taken from \cite{richens2025general}.}
    \label{fig:proof_strat}
\end{figure}

Two other key goals are considered:

\begin{align}
    &\psi_{b,k} = \bigvee_{r \leq k} \rho_{b, r} &\chi_{b,k} = \bigvee_{r > k} \rho_{b, r} \label{eq:def_chi_psi}
\end{align}

We can also compute their probability of being achieved:

\begin{lemma}\label{lemma:prob_of_psi_chi_depend_on_p}
    Let $\psi_{b, k}$ and $\chi_{b, k}$ be defined as in Eq.~\eqref{eq:def_chi_psi}. Then,
    \begin{align*}
        \max_{\pi} \Pr\left( \psi_{b, k}|\pi, s_0\right) = \sum_{r \leq k} \binom{n}{r} P(s'|s,a)^r(1-P(s'|s,a))^{n-r}\\
        \max_{\pi} \Pr\left( \chi_{b, k}|\pi, s_0\right) = \sum_{r > k} \binom{n}{r} P(s'|s,a)^r(1-P(s'|s,a))^{n-r}
    \end{align*}
\end{lemma}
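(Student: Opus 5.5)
The plan is to prove the two identities as upper and lower bounds, using the single policy $\pi^*$ from Lemma~\ref{lemma:prob_of_rho_depends_on_p}. Write $p = P(s'|s,a)$. We only treat $\psi_{b,k}$; the argument for $\chi_{b,k}$ is the same with $r \leq k$ replaced by $r > k$.

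\textbf{Step 1: incompatibility.} First we show that the goals $\varphi_{b,w}$, for distinct $w \in \{0,1\}^n$, are pairwise incompatible.

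Fix a history $h$ satisfying $\varphi_{b,w}$. The $\Diamond$ operator is evaluated at the least index where the condition holds. Hence the $i$-th occurrence of $\varphi_2$ in the sequential goal picks out exactly the $i$-th visit of $h$ to $(s,a)$ after the starting position of the previous step. The first occurrence starts from position $0$, which also checks the first action $b$.

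This visit structure is determined by $h$ alone and does not depend on $w$. The subsequent $\varphi_3^{w_i}$ then forces $w_i$ to equal the indicator that the next state is $s'$. So $h$ determines $w$ uniquely, and no history satisfies two different $\varphi_{b,w}$.

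The main subtlety is exactly this point: the sequential semantics with ``least index'' must pin down the matching positions deterministically. One must also check the edge case where $\bigcirc$ advances one step, so the next $\Diamond$ search starts at that next pair. If $s'=s$, the next pair may itself be $(s,a)$; it is then counted consistently for every $w$, so uniqueness is preserved.

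\textbf{Step 2: additivity.} Since the $\varphi_{b,w}$ are pairwise incompatible, so are the goals $\rho_{b,r}$ for distinct $r$. The $\rho_{b,r}$ with $r \leq k$ are therefore pairwise incompatible. Applying Lemma~\ref{lemma:disjoint_goals_sum_probs} repeatedly (by induction on the number of disjuncts), for every policy $\pi$ we get
\[
\Pr\left(\psi_{b,k}|\pi,s_0\right) = \sum_{r\le k}\Pr\left(\rho_{b,r}|\pi,s_0\right).
\]

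\textbf{Step 3: upper bound.} Bounding each summand in Step 2 by its maximum and using Lemma~\ref{lemma:prob_of_rho_depends_on_p} gives
\[
\max_\pi \Pr\left(\psi_{b,k}|\pi,s_0\right) \leq \sum_{r\le k}\binom{n}{r}p^r(1-p)^{n-r}.
\]

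\textbf{Step 4: lower bound.} Lemma~\ref{lemma:prob_of_rho_depends_on_p} provides one policy $\pi^*$ that attains the maximum for every $\rho_{b,r}$ simultaneously. Plugging $\pi^*$ into the identity of Step 2 shows that the sum on the right of Step 3 is achieved. The maximum therefore equals that sum.

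It is essential here that the optimizer is common to all $r$. Without this, Step 3 would only give an upper bound.
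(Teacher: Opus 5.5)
Your proposal is correct and follows the paper's route. The paper calls the lemma an immediate consequence of the additivity Lemma~\ref{lemma:disjoint_goals_sum_probs} and the common optimizer $\pi^*$ of Lemma~\ref{lemma:prob_of_rho_depends_on_p}, which is exactly your Steps 2--4. Your Step 1 (pairwise incompatibility of the $\varphi_{b,w}$ because the visit positions do not depend on $w$) and your remark that the lower bound needs a single maximizer for all $r$ at fixed $b$ make explicit what the paper leaves implicit.
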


\begin{proof}
    This is an immediate consequence of Lemmas~\ref{lemma:disjoint_goals_sum_probs} and~\ref{lemma:prob_of_rho_depends_on_p}
\end{proof}

\section{New results}\label{sec:new_res}

\subsection{The case of stochastic policies} 

We extend Theorem~\ref{teo:deterministic_policies_induce_transitions} for stochastic policies. This extension comes at the cost of requiring $\delta < \frac{1}{2}$ and a slower rate of convergence.

\begin{theorem}\label{teo:non-deterministic-policies-induce-transitions}
    Let $(\states, \actions, P)$ be a cMDP, $\pi$ an $(2n+1)$-th-degree $\delta$-optimal goal-conditioned policy with $\delta < \frac{1}{2}$, and $L = \log\left( \frac{2(1-\delta)}{1-2\delta} \right)$. Then, for any $s,s' \in \states$ and $a \in \actions$ it is possible to define an estimator $\hat{P}(s'|s, a)$ by querying the policy $\pi$ such that
    \begin{align*}
        |\hat{P}(s'|s,a) -P(s'|s,a)| \leq \sqrt{\frac{2P(s'|s,a)(1-P(s'|s,a))}{n}L} + \frac{2L}{3n} + \frac{1}{n}
    \end{align*}

    In particular, for any fixed $\delta$ the error of the estimator goes to zero as $n$ goes to infinity at rate $O(1/\sqrt{n})$.
\end{theorem}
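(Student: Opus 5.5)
The plan is to adapt the ``first action as a vote'' argument of \cite{richens2025general}, but to read off the \emph{probability} with which the stochastic agent takes its first action rather than the action itself. Fix two distinct actions $b \neq b'$ and write $p = P(s'|s,a)$. Let $X \sim \mathrm{Bin}(n,p)$ and $F(k) = \Pr(X \le k)$. For each $k \in \{0,\ldots,n\}$ I would query the agent on the goal
\begin{align*}
    \theta_k = \psi_{b,k} \vee \chi_{b',k}.
\end{align*}
Each $\varphi_{b,w}$ has length $2n+1$, so $\theta_k \in \Psi_{2n+1}$. The two disjuncts are incompatible, because they prescribe different first actions. Let $q_k = \Pr(\pi_{\theta_k}(s_0) = b)$, which can be read directly from the black box.

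\textbf{Step 1: the optimal value.} I would show that
\begin{align*}
    \max_\pi \Pr(\theta_k|\pi,s_0) = \max(F(k), 1-F(k)),
\end{align*}
and that any policy whose first action is $b$ with probability $q$ succeeds with probability at most $qF(k) + (1-q)(1-F(k))$. The lower bound on the maximum is witnessed by the policy $\pi^*$ of Lemma~\ref{lemma:prob_of_rho_depends_on_p}, using Lemma~\ref{lemma:prob_of_psi_chi_depend_on_p}. For the upper bound, I condition on the first action. If it is $b$, only $\psi_{b,k}$ can be satisfied, and it is satisfied with probability at most $F(k)$. The reason is that the continuation policy, prefixed with $b$, is itself a policy, and Lemma~\ref{lemma:prob_of_psi_chi_depend_on_p} bounds its value for $\psi_{b,k}$ by $F(k)$. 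The case $b'$ is symmetric, with bound $1-F(k)$. Making this conditioning rigorous, namely checking that the conditional measure is again induced by some policy, is the main technical point I expect to need care.

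\textbf{Step 2: the agent's vote is informative.} Apply $\delta$-optimality, $qF + (1-q)(1-F) \ge (1-\delta)\max(F,1-F)$. Suppose $q_k \ge \tfrac12$ and $1-F > F$. Rearranging gives $q_k(1-2F) \le \delta(1-F)$, hence $(1-2F)/2 \le \delta(1-F)$, that is,
\begin{align*}
    F(k) \ge \frac{1-2\delta}{2(1-\delta)} = e^{-L}.
\end{align*}
If instead $F \ge 1-F$, this bound holds trivially since $e^{-L} < \tfrac12$. Symmetrically, $q_k < \tfrac12$ implies $1-F(k) \ge e^{-L}$.

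Now define $k^*$ as the least $k$ with $q_k \ge \tfrac12$, setting $k^* = n$ if there is none, and output $\hat P = k^*/n$. By Step 2, $\Pr(X \le k^*) \ge e^{-L}$. Also $\Pr(X \ge k^*) \ge e^{-L}$: this follows from $1-F(k^*-1) \ge e^{-L}$ when $k^* \ge 1$, and is trivial when $k^* = 0$.

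\textbf{Step 3: concentration.} Bernstein's inequality for the binomial gives
\begin{align*}
    \Pr(X - np \ge t) \le \exp\!\left(-\frac{t^2}{2(np(1-p) + t/3)}\right),
\end{align*}
and likewise for the lower tail. Combining this with $\Pr(X \ge k^*) \ge e^{-L}$, taking $t$ just below $k^* - np$, forces
\begin{align*}
    t^2 \le 2L\,np(1-p) + \tfrac{2}{3}Lt.
\end{align*}
Solving the quadratic gives $t \le \sqrt{2np(1-p)L} + \tfrac{2L}{3}$. The same argument applies on the other side. Accounting for integrality of $k^*$ costs an additive $1$. Dividing by $n$ yields the claimed bound, and for fixed $\delta$ the leading term is $O(1/\sqrt n)$.
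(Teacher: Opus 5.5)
Your proposal is correct and follows essentially the same route as the paper. It uses the same dichotomic goals $\psi_{b,k}\vee\chi_{b',k}$, the same use of $\delta$-optimality to force the tail threshold $e^{-L}=\frac{1-2\delta}{2(1-\delta)}$, a crossing index of the agent's first-action probabilities as the estimator, and Bernstein's inequality. Two small remarks: your $k^*$ plays the role of the paper's $x+1$ and gives both tail bounds at the same index, so the additive $\tfrac{1}{n}$ is actually slack; and the conditioning you flag in Step 1 is immediate, because redefining the policy to play $b$ with probability one at the initial history is again a policy.
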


\begin{proof}
    Consider the goals $\rho_{b, r}$, $\psi_{b,k}$ and $\chi_{b,k}$ defined in Eqs.~\eqref{eq:def_of_rho} and~\eqref{eq:def_chi_psi}, each of which has depth $2n+1$.

    If $X = Bin(P(s'|s, a), n)$ is a binomial random variable, by Lemmas~\ref{lemma:prob_of_rho_depends_on_p} and~\ref{lemma:prob_of_psi_chi_depend_on_p}
    \begin{align*}
        \Pr\left(X=r\right) &= \max_{\pi}\Pr\left(\rho_{b, r}|\pi, s_0\right)\\
        \Pr\left(X \leq k\right) &= \max_{\pi} \Pr\left(\psi_{b, k}|\pi, s_0\right)\\
        \Pr\left(X > k\right) &= \max_{\pi} \Pr\left(\chi_{b, k}|\pi, s_0\right)
    \end{align*}

    Let $\xi_{k} = \psi_{a,k} \vee \chi_{b,k}$ with $a\neq b$. Note that
    \begin{align*}
        \max\{\Pr\left(X\leq k\right), \Pr\left(X > k\right)\} = \max_{\pi} \Pr\left(\xi_{k}| \pi,s_0\right)
    \end{align*}
    because on the first action the policy has to decide whether to aim for $\psi_{a, k}$ or $\chi_{b, k}$ by taking respectively action $a$ or $b$, and it is optimal to put all the probability in the easiest goal to achieve.

    Let $p_{c,k} = \Pr(\pi_{\xi_k}(s_0) = c)$ for $c \in \{a, b\}$. By $\delta$-optimality of $\pi$, we have
    \begin{align*}
        (1-\delta)\max(\Pr(X\leq k), \Pr(X > k)) \leq p_{a, k} \Pr(X \leq k) + p_{b,k} \Pr(X > k)
    \end{align*}

    Suppose that $\Pr(X \leq k) = \frac{1}{2} + \varepsilon$ for some $\varepsilon\geq 0$. Then, because $p_{a,k} + p_{b,k} \leq 1$, it holds that
    \begin{align*}
        (1-\delta)\Pr(X \leq k) &\leq p_{a,k} \Pr(X\leq k) + p_{b,k}\Pr(X > k)\\
        &\leq p_{a, k} \Pr(X \leq k) + (1-p_{a,k})(1-\Pr(X \leq k))
    \end{align*}
    which implies, letting $F = \Pr(X \leq k)$, the inequality
    \begin{align*}
        p_{a,k} \geq \frac{(2-\delta)F-1}{2F-1}
    \end{align*}
    and replacing $F=\frac{1}{2}+\varepsilon$ we obtain
    \begin{align}\label{eq:lower_bound_probabilities}
        p_{a,k} \geq 1 - \frac{\delta}{2} - \frac{\delta}{4\varepsilon}
    \end{align}
    We can deduce an analogous inequality for $p_{b,k}$ when $\Pr(X> k) = \frac{1}{2} + \varepsilon$.

    These equations constrain the probabilities $p_{a,k}$ and $p_{b,k}$, implying that the agent must focus more on either $\psi_{a, k}$ (i.e. $p_{a,k} > p_{b,k}$) or $\chi_{b, k}$ (i.e. $p_{a,k} < p_{b,k}$) whenever there is a big gap between $\Pr(X \leq k)$ and $\Pr(X > k)$. For example, when $k=n$ it is the case that $p_{a,k} \geq 1-\delta > \frac{1}{2}>p_{b,k}$ because $\Pr(X > k) = 0$. A similar argument applies when $k = -1$, and we conclude that $p_{b,-1}>p_{a,-1}$.

    In general, through Eq.~\eqref{eq:lower_bound_probabilities} we note that when $\varepsilon > \frac{\delta}{2(1-\delta)}$ the agent is forced to place more probability in action $a$, implying that $p_{a,k} > \frac{1}{2}$. Similarly, when $\Pr(X > k) = \frac{1}{2} + \varepsilon$ with $\varepsilon > \frac{\delta}{2(1-\delta)}$ we conclude that $p_{b, k} > \frac{1}{2}$. Note that since $\varepsilon \leq \frac{1}{2}$, if such a $\varepsilon$ exists it must be the case that $\frac{\delta}{2(1-\delta)}< \frac{1}{2}$, which implies $\delta < \frac{1}{2}$.

    Therefore, to estimate the mean of the distribution (which has the form $pn$ with $p=P(s'|s,a)$) we will look at the following index:
    \begin{align*}
        x &= \max \{i \in [-1, n] : p_{b, i} \geq p_{a,i}\}
    \end{align*}

   Let $\alpha = \frac{1-2\delta}{2(1-\delta)}$. We note that, because $p_{b,x} \geq p_{a,x}$, it can't be the case that $P(X> x) < \frac{1}{2} - \frac{\delta}{2(1-\delta)} = \alpha$, since that would imply that $p_{a,x} > p_{b, x}$ by our previous remarks. Thus, we conclude that $P(X > x) \geq \alpha$. Through a similar argument we observe that since $p_{a, x+1} > p_{b, x+1}$ (remember that $x < n$ because $P(X \leq n) = 1$) it holds that $P(X\leq x+1) \geq \alpha$. See Figure~\ref{fig:proof_arg} for a visual explanation of this reasoning in a particular case.

    We will use the Bernstein-Freedman inequalities, which state that for any random variable $Z$ with mean $\mu$ and variance $\sigma^2$ and $t \geq 0$ it holds that
    \begin{align*}
        \Pr(Z-\mu \geq t) &\leq \exp{}\left(-\frac{t^2}{2\sigma^2 + \frac{2t}{3}}\right)\\
        \Pr(\mu-Z \geq t) &\leq \exp{}\left(-\frac{t^2}{2\sigma^2 + \frac{2t}{3}}\right)
    \end{align*}

    Let $L = \log{} \frac{1}{\alpha}$, which satisfies $L > 0$ because $0< \alpha < 1$. We set $t_{\star} = \sqrt{2\sigma^2L} + \frac{2}{3}L$, noting that $\frac{t_{\star}^2}{2\sigma^2 + \frac{2t_{\star}}{3}} > L$, which implies that $\Pr(X \geq \mu +  t_{\star}) < e^{-L} = \alpha$ and that $\Pr(X \leq \mu -t_{\star}) < \alpha$.

    If $x \geq \mu + t_{\star}$ we conclude that
    \begin{align*}
        \alpha \leq \Pr(X > x) \leq \Pr(X \geq \mu + t_{\star}) < \alpha
    \end{align*}
    which is absurd. Thus, it is the case that $x < \mu + t_{\star}$.

    Through an analogous argument we see that if $x+1 \leq\mu - t_{\star}$ then
    \begin{align*}
    1-\alpha \geq \Pr(X > x+1) \geq \Pr(X > \mu-t_{\star}) = 1-\Pr(X \leq \mu - t_{\star}) > 1-\alpha
    \end{align*}
    which is again absurd. Thus,
    \begin{align*}
        \mu - t_{\star}-1 \leq x \leq \mu + t_{\star}
    \end{align*}
    and therefore
    \begin{align*}
        \left|x - \mu\right| \leq t_{\star}+1
    \end{align*}

    Rewriting $t_{\star}$ and dividing by $n$, we conclude that
    \begin{align*}
        \left|\frac{x}{n} - p\right| \leq \sqrt{\frac{2p(1-p)}{n} \log\left(\frac{2(1-\delta)}{1-2\delta}\right)} + \frac{2}{3n}\log\left(\frac{2(1-\delta)}{1-2\delta}\right) + \frac{1}{n}
    \end{align*}
    which converges to zero as $n \to \infty$ for any $\delta$ with speed $O(1/\sqrt{n})$ because of the first term.
\end{proof}

\begin{figure}[!ht]
\captionsetup{width=0.8\textwidth}
    \centering
    \includegraphics[width=0.8\linewidth]{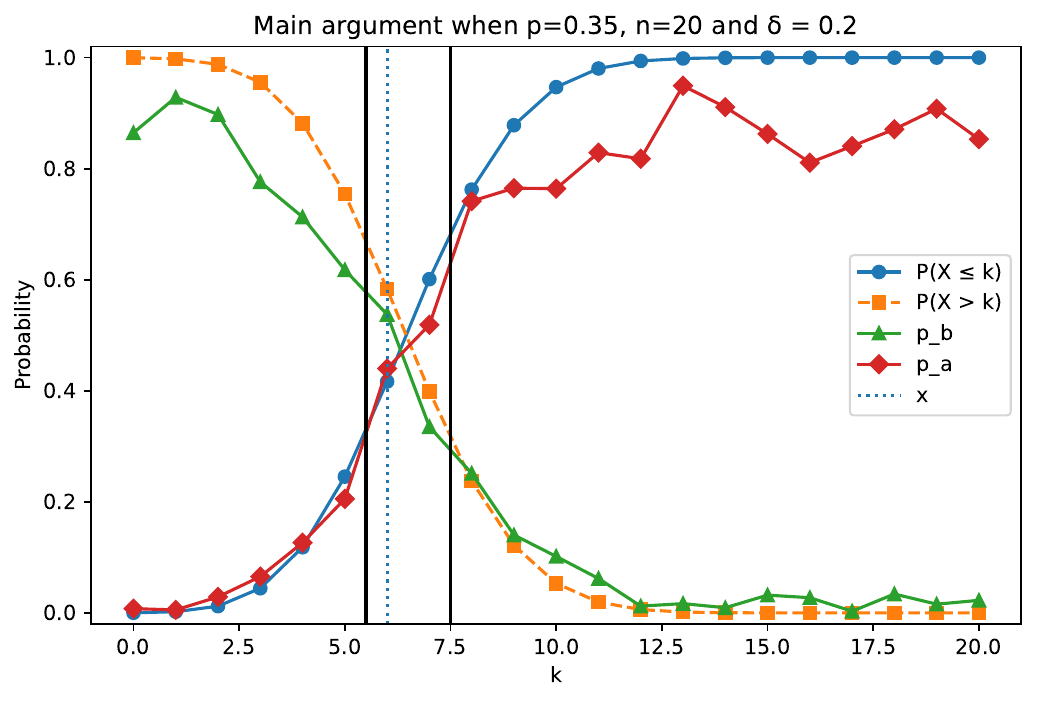}
    \caption{Diagram of the proof of Theorem~\ref{teo:non-deterministic-policies-induce-transitions} when $p=0.35$, $n=20$ and $\delta = 0.2$. The orange and blue lines (squares and circles) are the probabilities $P(X > k)$ and $P(X \leq k)$ for $k =0\ldots n$. The green and red lines (triangles and diamonds) represent feasible values of $p_{b,k}$ and $p_{a,k}$ for the different values of $k$ (they were chosen randomly). Note that our predictor $x$ would correspond to the vertical line at 6. In this case $\varepsilon = 0.125$, which means that for $k$ on the left of the black line at 5.5 we are guaranteed that $p_{b,k} \geq p_{a,k}$, and similarly, for $k \geq 8$ we know that $p_{a,k} \geq p_{b,k}$. Inside the black bars both options are possible, and thus the predictor $x$ could have turned out to be $5$ or $7$ instead.}
    \label{fig:proof_arg}
\end{figure}

We observe that the hypothesis $\delta < \frac{1}{2}$ is required only because the proof strategy relies on dichotomic goals, and therefore a $\frac{1}{2}$-optimal agent can set $p_{a,k} = p_{b,k} = \frac{1}{2}$ for all $k$ which would give no information about the distribution of $X$.

Theorem~\ref{teo:non-deterministic-policies-induce-transitions} is a natural extension of Theorem~\ref{teo:deterministic_policies_induce_transitions}, especially considering that the proof strategy is identical. We note that the dependence on $\delta$ is worse, and in general that the constants in the error bound are larger. We believe this is also reasonable: it is common for stochastic policies or algorithms to avoid a deeper understanding of the task they are trying to solve by taking randomized choices. In the introduction we mention the case of the Miller-Rabin test and the AKS algorithm. Another example is the problem of deciding whether a polynomial on many variables is identically 0: it is easy to solve by evaluating random points and invoking the Schwartz-Zippel Lemma \cite{schwartz1980fast}, while the best deterministic algorithms exploit deeper properties about polynomials and are less efficient.

\subsection{Partially observable environments}

We extend Theorem~\ref{teo:non-deterministic-policies-induce-transitions} to partially observable environments.

\begin{definition}
    A \textbf{partially observable cMDP} is a 5-tuple $(\states, \actions, P, \observations, \Omega)$ where $(\states, \actions, P)$ is a cMDP, $\observations$ is a set of observations and $\Omega(\cdot|\cdot): \states \times \observations \to [0,1]$ is a stochastic kernel satisfying $\sum_{o \in \observations} \Omega(o | s) = 1$ for all $s \in \states$.
\end{definition}

Partially observable cMDPs generalize cMDPs by introducing an ``observation'' kernel $\Omega$ that describes how the policy gets to know the current state.

\newcommand{\failObs}{\textsc{fail}}

\begin{example}\label{example:partially_obs_cMDP}
    \normalfont Let $(\states, \actions, P, \observations, \Omega)$ be a partially observable cMDP where $(\states, \actions, P)$ is the cMDP from Example~\ref{example:cMDP}. Let $\observations = \{s_{-2}, s_{-1}, s_0, s_1, s_2, \failObs\}$, where $\failObs$ corresponds conceptually to the situation when the policy fails to detect its current state. Define $\Omega$ as
    \begin{align*}
        \Omega(\failObs|\cdot) &= p_F\\
        \Omega(s|s) &= 1-p_F \text{ for every } s \in \states
    \end{align*}
    where $p_F \in (0, 1)$ represents the failure probability. Note that when the agent takes action $R$ at $s_0$ it fails to observe whether the action was successful (i.e. if the current state is $s_1$) with probability $p_F$. 
\end{example}

A partially observable cMDP is finite or communicating if its underlying cMDP is finite or communicating.

\newcommand{\historyObs}{\history_{\observations}}
\newcommand{\finiteHistoriesObs}{\mathbb{H}_{\observations, \mathcal{F}}}

A history of observations is an infinite sequence of observation-action pairs $o_0a_0o_1 a_1\ldots$, and we denote by $\history_{\observations}$ the set of all histories of observations. A finite history is a finite sequence of observation-action pairs that starts and ends with an observation, and we denote by $\history_{\observations, \mathcal{F}}$ the set of finite histories of observations. A \textbf{observation-based policy} (i.e. a policy for a partially observable cMDP) is a mapping $\pi:\finiteHistoriesObs \to \Delta(\actions)$. We define goal-conditioned observation-based policies in the same way as we did for fully observable cMDPs. The notions of markovian and deterministic policies are defined in the natural way following what we did in the fully observable case.

\begin{example}\label{example:observation_based_policy}
    \normalfont Consider the policy over the cMDP from Example~\ref{example:partially_obs_cMDP} defined as
    \begin{align*}
        \Pr\left(\pi(o_0 a_0\ldots o_i) = R\right)=1 \iff o_i \in \{s_{-2}, s_{-1}, s_0\}\\
        \Pr\left(\pi(o_0 a_0\ldots o_i) = L\right)=1 \iff o_i = \{s_1,s_2,\failObs\}
    \end{align*}
    
    This policy is deterministic and markovian, and tries to stay at $s_0$. Whenever the \failObs{} observation is recorded, the policy tries to move left.
\end{example}

Fix some policy $\pi$ and starting state $s$. Given a real (state) history $h = s_0a_0 \ldots s_n$ and a history of observations $h_\observations = o_0a_0 \ldots o_n$ we can compute the joint probability of $h$ and $h_\observations$ given $\pi$ and $s$ as
\begin{align*}
    \Pr\left(h,h_\observations |\pi, s\right) = \begin{cases}
        \prod_{i=0}^n \Omega(o_i|s_i) \prod_{i=0}^{n-1} \Pr\left(\pi((h_{\observations})_i) = a_i\right) P(s_{i+1}|s_i,a_i) & s_0 = s\\
        0 & \text{otherwise}
    \end{cases}
\end{align*}
where $(h_\observations)_i = o_0a_0\ldots o_i$. Then, we can compute the probability of a given real history as
\begin{align*}
    \Pr\left( h |\pi, s \right) =
        \sum_{h_\observations = o_0a_0\ldots o_n \in \finiteHistoriesObs} \Pr\left(h,h_\observations | \pi, s\right)
\end{align*}

Given some goal $\psi$ as the one previously described, we can straightforwardly define the probability of it being achieved as $\Pr\left(\psi | \pi, s\right) = \sum_{\substack{h \in \history \\ h \models \psi}} \Pr\left( h | \pi, s\right)$.

\begin{example}
    \normalfont Consider again the cMDP of Example~\ref{example:cMDP}. Let $\psi = \langle \Diamond [\{s_{-2}, s_{2}\}], \bigcirc[s= s_0]\rangle$ and consider the policy given by
    \begin{align*}
        \Pr\left(\pi(o_0 a_0 \ldots o_n) = L \right) = 1 &\iff o_n \in \{s_{-2}, s_{-1}, s_{0}, s_1, s_2\}\\
        \Pr\left(\pi(o_0 a_0 \ldots o_n) = R \right) = 1 &\iff o_n \in \{\failObs\}
    \end{align*}
    Policy $\pi$ attempts to fulfill $\psi$ by always moving left, except than when it fails at looking at the current state it moves right. It can be seen that
    \begin{align*}
        \Pr\left(\psi | \pi, s_0\right) = (1-p_F)p_L + p_Rp_F
    \end{align*}
    The reasoning is as follows: if the policy satisfies the first component of $\psi$ by arriving to $s_{-2}$, then it has a probability of $(1-p_F)p_L$ of then taking the edge $s_{-2} \overset{L}{\rightarrow} s_0$ succesfully. Otherwise, if it first arrived to $s_2$, then its probability of going to $s_0$ in the next step is $p_R p_F$ through the edge $s_{2} \overset{R}{\rightarrow} s_0$.
\end{example}

We observe that since the policy depends on observations and the goal is phrased in terms of real states, the policy can potentially achieve some goal without realizing it.

Our motivation for extending Theorem~\ref{teo:deterministic_policies_induce_transitions} into Theorem~\ref{teo:non-deterministic-policies-induce-transitions} before considering the partially observable case stems from the fact that stochasticity can be relevant for achieving goals when information is missing.

\begin{example}\label{example:randomness_in_part_obs_is_useful}
    \normalfont Consider the partially observable cMDP from Figure~\ref{fig:three-state-graph} and the goal $\varphi = \Diamond[\states = s_3]$. Note that any deterministic, markovian and observation-based policy $\pi$ will satisfy $\Pr\left(\varphi | \pi, s_1\right) = 0$, since to arrive at $s_3$ actions $a$ and $b$ must be taken, and meanwhile any policy of this kind will always perform either $a$ or $b$ when on states $s_1$ and $s_2$ because they both provide the same observation. By contrast, a fully stochastic uniform policy (i.e. $\Pr\left( \pi(h) = a \right) = \frac{1}{|\actions|}$ for every $a \in \actions$) will reach $s_3$ with probability one. Note that this randomized policy is markovian.
\end{example}

\begin{figure}[ht]
  \centering
  \begin{tikzpicture}[
      node/.style={circle, draw, thick, minimum size=9mm},
      edge/.style={->, thick},
      lab/.style={font=\small, fill=white, inner sep=1pt},
      grouplab/.style={font=\small, fill=white, inner sep=1pt},
      scale=1.1
  ]

  \node[node] (s1) at (-3,0) {$s_1$};
  \node[node] (s2) at (0,0) {$s_2$};
  \node[node] (s3) at (3,0) {$s_3$};

  \draw[thick, dashed] (-1.5,0) circle (2.4);
  \node[grouplab] at (0.2,2.1) {$o_1$};

  \draw[thick, dashed] (3,0) circle (0.95);
  \node[grouplab] at (3.8,0.85) {$o_2$};

  \draw[edge, bend left=15] (s1) to node[midway, above, lab] {$a$} (s2);
  \draw[edge, bend left=15] (s2) to node[midway, above, lab] {$a$} (s1);
  \draw[edge, bend left=30] (s3) to node[midway, above, lab] {$a$} (s1);

  \draw[edge] (s2) -- (s3) node[midway, above] {\scriptsize $b$};
  \draw[edge, looseness=8] (s1) to[out=-120,in=-60] node[below, lab] {$b$} (s1);
  \draw[edge, looseness=8] (s3) to[out=-120,in=-60] node[below, lab] {$b$} (s3);

  \end{tikzpicture}
  \caption{A partially observable cMDP with three states and two actions. An edge $s \overset{a}{\rightarrow} s'$ indicates that $P(s'|s,a) = 1$. Both states $s_1$ and $s_2$ are related to a unique observation $o_1$ (i.e. $\Omega(o_1|s_1) = \Omega(o_2,s_2) = 1$). State $s_3$ has its own observation $o_2$.}
  \label{fig:three-state-graph}
\end{figure}
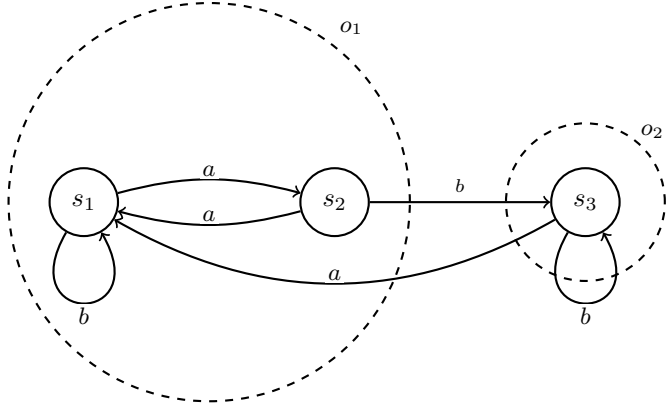

Observe that the notion of an $n$th degree $\delta$-optimal agent still makes sense in partially observable worlds. Then, we prove the following statement.

\begin{theorem}\label{teo:non-deterministic-policies-induce-transitions-in-partial-obs}
    Theorem~\ref{teo:non-deterministic-policies-induce-transitions} continues to hold even if the cMDP is partially observable and $\pi$ is a goal-conditioned observation-based policy.
\end{theorem}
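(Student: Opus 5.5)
The plan is to show that the proof of Theorem~\ref{teo:non-deterministic-policies-induce-transitions} uses only two facts about the environment: the optimal success probabilities of $\rho_{b,r}$, $\psi_{b,k}$, $\chi_{b,k}$ and $\xi_k$ (Lemmas~\ref{lemma:prob_of_rho_depends_on_p} and~\ref{lemma:prob_of_psi_chi_depend_on_p}), and the fact that an optimal agent for $\xi_k$ commits to $\psi_{a,k}$ or $\chi_{b,k}$ through its first action. I will re-establish both facts when the maximum ranges over observation-based policies. After that, the argument from $\delta$-optimality to the Bernstein--Freedman bound carries over verbatim with the same index $x$ and the same constants.

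\textbf{Upper bound on optimal values.} An observation-based policy $\pi$ together with $\Omega$ induces an ordinary (stochastic, history-dependent) state-based policy $\tilde\pi$. At history $s_0a_0\ldots s_i$, $\tilde\pi$ samples $o_j\sim\Omega(\cdot|s_j)$ conditionally on the past and plays according to $\pi$. By the definition of $\Pr(h|\pi,s)$, this $\tilde\pi$ has the same law of real histories. Hence $\max$ over observation-based policies is at most $\max$ over state-based policies. This is bounded by the values in Lemma~\ref{lemma:prob_of_rho_depends_on_p}: the upper bound there holds for all stochastic policies, since each of the $n$ visits to $(s,a)$ lands in $s'$ with probability $P(s'|s,a)$ independently of the past, by the Markov property.

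\textbf{Matching lower bound, the main obstacle.} The deterministic ``move toward $s$'' policy of Lemma~\ref{lemma:prob_of_rho_depends_on_p} needs to see the state, so it is unavailable here. I will replace it with a blind policy: take action $b$ first, then choose actions uniformly at random forever, ignoring observations. Under the uniform policy the induced Markov chain on $\states$ has an edge $s\to s''$ whenever some action has $P(s''|s,\cdot)>0$. Since the cMDP is finite and communicating, this chain is irreducible, so every state, in particular $s$, is visited infinitely often almost surely. At each visit action $a$ is taken with probability $1/|\actions|$, so $(s,a)$ is visited infinitely often almost surely. By the strong Markov property, the successor of each successive visit to $(s,a)$ is $s'$ with probability $p=P(s'|s,a)$, independently. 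Hence this blind policy attains $p^{|w|_1}(1-p)^{n-|w|_1}$ for $\varphi_{b,w}$, and by Lemma~\ref{lemma:disjoint_goals_sum_probs} it also attains the binomial values for $\rho_{b,r}$, $\psi_{b,k}$ and $\chi_{b,k}$. This is exactly where randomization is needed, in line with Example~\ref{example:randomness_in_part_obs_is_useful}, so the restriction to observation-based policies does not lower any of the optimal values.

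\textbf{The dichotomic goal $\xi_k$.} Define $p_{c,k}=\sum_{o}\Omega(o|s_0)\Pr(\pi_{\xi_k}(o)=c)$, the probability that the first action is $c$. Conditioning on the first observation and action gives
\begin{align*}
    \Pr(\xi_k|\pi_{\xi_k},s_0)\le p_{a,k}\Pr(X\le k)+p_{b,k}\Pr(X>k),
\end{align*}
because after first action $a$ only $\psi_{a,k}$ can be satisfied, and its probability is bounded by the maximum computed above; the same holds for $b$. The maximum $\max\{\Pr(X\le k),\Pr(X>k)\}$ is attained by the blind policy with the better first action. From here the inequalities in the proof of Theorem~\ref{teo:non-deterministic-policies-induce-transitions} go through unchanged.

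One remaining point is that computing $p_{c,k}$ by querying $\pi$ now involves averaging over the first observation. I would handle this either by assuming $\Omega(\cdot|s_0)$ is known, or by noting that $p_{c,k}$ is precisely the first-action marginal of the agent run in the environment and can be sampled. The estimator $\hat P=x/n$ is then defined exactly as before.
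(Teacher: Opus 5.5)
Your proposal is correct and takes essentially the paper's route. The paper also replaces the navigating policy of Lemma~\ref{lemma:prob_of_rho_depends_on_p} by ``take $b$, then act uniformly at random ignoring observations''. It shows this policy attains $p^{|w|_1}(1-p)^{n-|w|_1}$ and then reruns the argument of Theorem~\ref{teo:non-deterministic-policies-induce-transitions}. Where the paper uses an induction over sums of history probabilities, you use recurrence of the induced irreducible chain plus the strong Markov property, which is the rigorous form of the paper's unproved ``the first summation reduces to $p$''. Your simulation argument for the upper bound, and your observation that $p_{c,k}$ must average over the first observation, address points the paper leaves implicit. On the latter, the sampling option would give the bound only with high probability and requires running the agent in the environment, so assuming $\Omega(\cdot|s_0)$ is known is the cleaner reading.
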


\begin{proof}
    The proof from Theorem~\ref{teo:non-deterministic-policies-induce-transitions} only depends on the fact that the probability of success for optimal policies on the proposed goals can be computed analytically and depends on the relevant probability as per Lemmas~\ref{lemma:prob_of_rho_depends_on_p} and~\ref{lemma:prob_of_psi_chi_depend_on_p}. We show that this also holds for the partially observable case.
    
    Following the notation from Lemma~\ref{lemma:prob_of_rho_depends_on_p}, let us define, for any $w \in \{0,1\}^n$, the goal
    \begin{align*}
        \varphi_{w} = \langle \varphi_2, \varphi_3^{w_1}, \ldots, \varphi_2, \varphi_3^{w_n} \rangle
    \end{align*}
    Let $\pi$ be a policy that always moves at random, i.e.
    \begin{align*}
        \Pr\left(\pi(o_0a_0 \ldots o_n) = b'\right) &= \frac{1}{|\actions|} \, \forall b' \in \actions
    \end{align*}
    If we show that for any $s_0 \in \states$ and $w \in \{0, 1\}$ it holds that
    \begin{align*}
        \Pr\left( \varphi_{w} | \pi, s_0\right) = p^{|w|_1}(1-p)^{n-|w|_1}
    \end{align*}
    where $p = P(s'|s,a)$ we will have finished, since Lemma~\ref{lemma:disjoint_goals_sum_probs} still holds in the partially observable case, and from these facts it is easy to observe that the policy $\pi'$ that first takes action $b$ and then acts like $\pi$ satisfies $\Pr\left( \rho_{r, n} |\pi, s_0\right) = \binom{n}{r} p^r (1-p)^{n-r}$.\footnote{Note that this policy $\pi'$ is neither deterministic nor markovian.}

    We proceed by induction on $n$. The result is trivially true for $n=0$. For the inductive case take $w \in \{0, 1\}^{n+1}$, and assume that $w =1 w'$ (the other case can be proven analogously). Note that, for any $s_0 \in \states$, 
    \begin{align}
        \Pr\left(\varphi_w | \pi, s_0 \right) &= \sum_{\substack{h \models \varphi_w}} \Pr\left(h |\pi, s_0\right)\nonumber\\
        &= \sum_{\substack{h \models \varphi_w}} \sum_{h_\observations \in \historyObs} \Pr\left(h, h_\observations |\pi, s_0\right)\nonumber\\
        &= \sum_{\substack{h \models \varphi_w \\ h=s_0a_0\ldots s_n}} \sum_{\substack{h_\observations \in \finiteHistoriesObs\\h_\observations= o_0a_0 \ldots o_n}} \prod_{i=0}^n \Omega(o_i|s_i) \prod_{i=0}^{n-1} \Pr\left(\pi((h_{\observations})_i) = a_i\right) P(s_{i+1}|s_i,a_i) \nonumber\\
        &=\sum_{\substack{h \models \varphi_w \\ h=s_0a_0\ldots s_n}} \sum_{\substack{h_\observations \in \finiteHistoriesObs\\h_\observations= o_0a_0 \ldots o_n}} \prod_{i=0}^n \Omega(o_i|s_i) \prod_{i=0}^{n-1} \frac{P(s_{i+1}|s_i,a_i)}{|\actions|} \nonumber\\
        &= \sum_{\substack{h \models \varphi_w \\ h=s_0a_0\ldots s_n}} \prod_{i=0}^{n-1} \frac{P(s_{i+1}|s_i,a_i)}{|\actions|} \sum_{\substack{h_\observations \in \finiteHistoriesObs\\h_\observations= o_0a_0 \ldots o_n}} \prod_{i=0}^n \Omega(o_i|s_i)\nonumber \\
        &= \sum_{\substack{h \models \varphi_w \\ h=s_0a_0\ldots s_n}} \prod_{i=0}^{n-1} \frac{P(s_{i+1}|s_i,a_i)}{|\actions|} \label{eq:rewriting_of_success}
    \end{align}
    In particular, we have shown that if the policy does not depend on the observations then the kernel $\Omega$ of partial observability does not have any impact on the computations of the probabilities. Note that during the computation when summing over the histories $h$ such that $h \models \varphi_w$ we have ``discretized'' the computations by taking a set of finite prefixes that satisfy $\varphi_w$. This is well-defined because the satisfaction of goals actually depends on finite prefixes.

    Now, let $\overline{P}(s_{i+1}| s_i, a_i) = \frac{P(s_{i+1}|s_i, a_i)}{|\actions|}$. If $h \models \varphi_w$ then $h = s_0 a_0 \ldots sas'\ldots$, and we factorize the histories as
    \begin{align*}
        \sum_{\substack{h \models \varphi_w \\ h=s_0a_0\ldots s_n}} &\prod_{i=0}^{n-1} \overline{P}(s_{i+1}|s_i,a_i) \\
        &=\sum_{\substack{h = s_0 a_0 \ldots s_{n-1}a_{n-1} s_{n} \\ s_{n-1} = s, a_{n-1} = a, s_{n} = s'}}  \prod_{i=0}^n \overline{P}(s_{i+1}| s_i, a_i) \sum_{\substack{h' = s_0' a_0'\ldots s_n'\\ s_0' = s'\\
        h' \models \varphi_{w'}}} \prod_{i=0}^{n-1} \overline{P}(s'_{i+1} | s'_i, a'_i)
    \end{align*}

    We note that the second summation is equal to $\Pr\left(\varphi_w' | \pi, s'\right)$ due to Eq.~\eqref{eq:rewriting_of_success}, which by inductive hypothesis is equal to $p^{|w|_1-1} (1-p)^{n-|w_1|}$. Finally, it can be shown that the first summation is reduced to $p$ when the environment is finite and communicating. Intuitively, observe that it is computing the probability of a history performing the transition $sas'$ before any other $sas''$ with $s'' \neq s'$ when the underlying policy is picking the next action uniformly at random.
\end{proof}

Note that for this proof we had to use a stochastic policy, while the proof from \cite{richens2025general} uses a deterministic and markovian policy which cannot be properly defined in the context of partial observability, as observed in Example~\ref{example:randomness_in_part_obs_is_useful}.

We would like to point out that this result is somewhat counterintuitive: in partially observable environments it can be the case that all states are indistinguishable (for example, if they all provide the same observation) and yet this results says that an optimal agent knows the underlying world. We make some observations that may clarify this situation.

First, note that to satisfy the goals from the proof of Theorem~\ref{teo:non-deterministic-policies-induce-transitions-in-partial-obs} the optimal policy $\pi'$ moves mostly at random, and the only ``reasoning'' comes at the first step, when it must decide whether to go for action $a$ or $b$. This points to the fact that these goals are completely independent from the world, except for the probability $P(s'|s,a)$. The only thing that the agent must understand is the arithmetic to decide which goal is more likely to be achieved when moving at random.

Secondly, even though the world might be fully non-observable, the goals are phrased in terms of the real underlying states, and thus an optimal agent must still know the underlying probabilities to decide which action to take when considering the goals. In particular, this implies that the agent can achieve a goal without knowing it. A way to overcome this ``weirdness'' is by phrasing goals in terms of observation (i.e. goals of the form $\top[V]$ with $V \subseteq \observations \times \actions$). It is easy to see that in that context an agent does not need to know the underlying world to be optimal (consider for example the fully non-observable case). We leave as future work the problem of characterizing the set of transitions that can be recovered from an optimal agent in this case.

Thirdly, observe that to achieve the requested goals moving at random is optimal because there is no discount factor impacting the time taken to reach the objective. Thus, adding some time penalty and defining optimality in a trade-off of achieving goals and achieving them fast might impact the statement of this result, rendering it false.

\subsection{Using goals of width at most 2}

We will define a subset of goals of $\Psi_n$ which is good enough to reconstruct a world model from an optimal policy. Given a goal $\psi \in \Psi$, we define its width as
\begin{align*}
    width(\psi) = width\left(\bigvee_{i\in I} \varphi_i\right) = |I|   
\end{align*}
We denote by $\Psi_{n, k}$ the set of goals of depth $n$ and width $k$.

Note that the results from Theorems~\ref{teo:deterministic_policies_induce_transitions},~\ref{teo:non-deterministic-policies-induce-transitions} and~\ref{teo:non-deterministic-policies-induce-transitions-in-partial-obs} rely on depth-$n$ goals with large width (for example, $width(\rho_{b, r}) = \binom{n}{r}$). We improve these results by obtaining an estimator using only goals of width 2. Moreover, the error of our estimator decays faster than that of Theorem~\ref{teo:deterministic_policies_induce_transitions}.

\begin{theorem}\label{teo:width_2}
    A deterministic goal-conditioned policy optimal on $\Psi_{2n+1, 2}$ allows to recover the transition probabilities with error scaling as $O(\log n/n)$. A deterministic $\delta$-optimal goal-conditioned policy on $\Psi_{6n+1, 2}$ also allows to recover the transition probabilities if $\delta < \frac{1}{2}$ with error growing as $O\left(\frac{\log n}{n}\right)$.
\end{theorem}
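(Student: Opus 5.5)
The plan is to replace the binomial-tail goals $\psi_{b,k},\chi_{b,k}$, whose width is exponential, by width-2 \emph{races} between two single sequential goals of different lengths. For integers $k,m\ge 0$, let $1^k$ and $0^m$ be the all-ones and all-zeros words, and consider
\begin{align*}
\xi_{k,m} = \varphi_{a,1^k} \vee \varphi_{b,0^m}, \qquad a\neq b .
\end{align*}
This goal has width 2 and depth $2\max(k,m)+1$, so for $k,m\le n$ it lies in $\Psi_{2n+1,2}$ after padding the shorter disjunct if the definition requires exact depth. Write $p=P(s'|s,a)$.

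First, by Lemma~\ref{lemma:prob_of_rho_depends_on_p} (the same policy works for words of any length) and Lemma~\ref{lemma:disjoint_goals_sum_probs}, the two disjuncts are incompatible because they require different first actions. Arguing exactly as for $\xi_k$ in Theorem~\ref{teo:non-deterministic-policies-induce-transitions}, I get
\begin{align*}
\max_\pi \Pr(\xi_{k,m}|\pi,s_0)=\max\{p^k,(1-p)^m\}.
\end{align*}
Hence a deterministic optimal agent plays $a$ at the first step only if $p^k\ge(1-p)^m$. Away from ties this is equivalent to
\begin{align*}
\frac{k}{m} \le r(p):=\frac{\log(1-p)}{\log p}.
\end{align*}
The cases $p\in\{0,1\}$ are detected separately, e.g.\ via $k=m=1$ together with the fact that the agent's choice is then forced. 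The function $r$ is a continuous strictly decreasing bijection $(0,1)\to(0,\infty)$. So querying the agent on all pairs $k,m\le n$ locates $r(p)$ between two consecutive fractions with numerator and denominator at most $n$. I then define $\hat P=r^{-1}(\hat r)$ for any $\hat r$ in that interval; ties only occur at grid points, so they cost nothing.

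The main obstacle is turning this grid resolution in $r$ into $O(\log n/n)$ accuracy in $p$, uniformly in $p$. When $r(p)\in[1/n,n]$, I will use the fractions $k/n$ and $n/m$, which give relative precision about $1/n$ in $r$, so the bracket satisfies $|\log \hat r-\log r(p)|=O(1/n)$. I then bound the derivative $\bigl|d p/d\log r\bigr|$, which is comparable to $p(1-p)\,|\log p|\,|\log(1-p)|$ up to constants. Since $r(p)\in[1/n,n]$ forces $p\ge 1/n$ and $1-p\ge 1/n$ up to constants, this quantity is $O(\log n)$, giving an error of $O(\log n/n)$. When $r(p)\notin[1/n,n]$, we have $p\lesssim 1/n$ or $1-p\lesssim 1/n$, and outputting the boundary value already has error $O(\log n/n)$. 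Checking the derivative bound near $p\to 0,1$ is the step I expect to need the most care.

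For the $\delta$-optimal case, a deterministic agent choosing $a$ only guarantees $p^k\ge(1-\delta)(1-p)^m$, so the threshold is blurred by an additive $\log\frac{1}{1-\delta}\le\log 2$ (as $\delta<\frac12$) in the comparison $k\log p$ versus $m\log(1-p)$. To make this blur negligible, I use words of length up to $3n$: the depth becomes $6n+1$ and each comparison is scaled so the constant $\log 2$ is dominated. Concretely, the agent's decisions are monotone outside a band where $|k\log p - m\log(1-p)|\le \log 2$, and this band perturbs the estimate of $\log r$ by $O(1/(n|\log p|))$. Propagating this through the same derivative bound again yields $O(\log n/n)$. Here $\delta<\frac12$ is what guarantees the agent's choice is still informative outside the band.
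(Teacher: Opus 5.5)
Your plan is the paper's own: width-2 races $\xi_{k,m}$, the threshold quantity $r(p)=\log(1-p)/\log p$ (the paper's $\alpha$), inversion through a derivative bound, and a separate small-$p$ regime. The paper instead reduces to $p\le\frac12$ via $\xi_{1,1}$ and sweeps $s=n$, $r=0,\dots,n$; your variant uses all pairs, a Farey bracket in $\log r$, and is symmetric in $p\leftrightarrow 1-p$. In the exact-optimal case it does give $O(\log n/n)$, but several supporting claims are false and need replacing:
\begin{itemize}
\item $r$ is strictly \emph{increasing}, not decreasing.
\item The derivative is $\frac{dp}{d\log r}=\bigl(\frac{1}{p|\log p|}+\frac{1}{(1-p)|\log(1-p)|}\bigr)^{-1}\le\min\{p|\log p|,(1-p)|\log(1-p)|\}\le 1/e$. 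This is not comparable to $p(1-p)|\log p||\log(1-p)|$; it behaves like $p$ as $p\to 0$.
\item The fractions $k/n$ and $n/m$ alone have relative spacing $\Theta(1)$ near $r\approx 1/n$ and $r\approx n$. You need the full bracket: consecutive Farey fractions $a/b<c/d$ of order $n$ with $a\ge 1$ have relative gap $1/(ad)$, and $ad\ge n/2-1$.
\item $r(p)<1/n$ only gives $p=O(\log n/n)$, not $O(1/n)$. This boundary regime is the sole source of the $\log n$; the interior error is $O(1/n)$.
\item The query $k=m=1$ cannot detect $p=0$, but the boundary case already covers it.
\end{itemize}

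The genuine gap is the $\delta$-optimal part.

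First, the bracket estimator does not survive the blur. For small exponents, the band $|k\log p-m\log(1-p)|\le\lambda:=\log\frac{1}{1-\delta}$ contains fractions far from $r(p)$, and answers need not be monotone. For $p=0.3$, $\delta=0.4$ and every $n$: answering $a$ on $\xi_{1,2}$ is allowed ($0.3\ge 0.6\cdot 0.49$) although $1/2>r(p)\approx 0.30$. Answering $b$ on $\xi_{0,1}$ is also allowed ($0.7\ge 0.6$). You must restrict to queries whose $(1-p)$-exponent is $m=\Theta(n)$, and specify a rule robust to arbitrary answers inside the band. The paper fixes $m=n$, sweeps $k=0,\dots,3n$, and takes the first switch.

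Second, your propagation step fails. The band gives $|1-mr/k|\le\lambda/(k|\log p|)$, so your $O(1/(n|\log p|))$ perturbation of $\log r$ needs $k\ge cn$. With word lengths at most $3n$, that only reaches $r\ge c/3$. With $m=\Theta(n)$ and $p\approx\log n/n$, the perturbation of $\log r$ is about $\lambda/(mp)=\Theta(1/\log n)$, and your $O(\log n)$ derivative bound then yields only $O(1)$.

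There are two ways to repair it:
\begin{itemize}
\item Use the correct derivative above. To first order, it turns a $\lambda/(m|\log(1-p)|)$ perturbation of $\log r$ into at most $\lambda/m$ in $p$.
\item Argue directly in $p$. The function $g(p)=k\log\frac1p-m\log\frac{1}{1-p}$ is decreasing with $|g'(p)|\ge m$. Answer $a$ forces $g(p)\le\lambda$ and answer $b$ forces $g(p)\ge-\lambda$. So each answer bounds $p$ on one side of the root $p^*_{k,m}$, with slack $\lambda/m$. With $m=n$, roots with $k\ge 1$ satisfy $p^*_{k+1,n}-p^*_{k,n}\le\log(1/p^*_{k,n})/n\le\log(n+1)/n$, since $p^*_{1,n}\ge\frac{1}{n+1}$. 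Hence the first switch $k_0\ge 2$ pins $p$ to an interval of length $(\log(n+1)+2\lambda)/n$, while a $b$ already at $\xi_{1,n}$ gives $p=O(\log n/n)$.
\end{itemize}

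Finally, $\delta<\frac12$ is not what makes answers informative outside the band; any $\delta<1$ does. The paper uses it on $\xi_{1,1}$ to reduce to $p\le\frac23$. That bounds $|\log p|$ from below and gives $r(p)\le 3$, which is the actual origin of $k\le 3n$ and depth $6n+1$. Your sketch asserts this range but never derives it.
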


\begin{proof}
    Consider goals of the form $\varphi_{b, r} = \varphi_{b, 0^r}$ and $\psi_{b, r} = \varphi_{b, 1^r}$ (using the notation from Lemma~\ref{lemma:prob_of_rho_depends_on_p}). It holds that, for any starting state $s_0$:
    \begin{align*}
        \max_{\pi} \Pr\left( \varphi_{b, r} | \pi,s_0 \right) &= p^r\\
        \max_{\pi} \Pr\left( \psi_{b, r} | \pi,s_0 \right) &= (1-p)^r
    \end{align*}
    where $p = P(s'|s,a)$. We will query the policy for goals of the form $\xi_{r, s} = \varphi_{a, r} \vee \psi_{b, s}$, which satisfy
    \begin{align*}
        \max_{\pi} \Pr\left( \xi_{r, s} | \pi,s_0 \right) = \max\{ p^r, (1-p)^{s} \} 
    \end{align*}

    Note that if the policy goes for action $a$ on its first step when aiming for goal $\xi_{r, s}$, it holds that $p^r \geq (1-p)^s$, and the opposite inequality holds if it goes for action $b$. In particular, we can decide if $p \leq \frac{1}{2}$ by querying for $\xi_{1, 1}$. From now on, we assume that $p \leq \frac{1}{2}$, since otherwise we can rewrite $p \to 1-p$ and apply the same arguments.

    If, when querying $\xi_{1, n}$, the policy goes for action $b$, we know that
    \begin{align*}
        p \leq (1-p)^n \leq e^{-np} \iff pe^{np} \leq 1
    \end{align*}
    Letting $x =np$ this implies that $x e^x \leq n$, and therefore $x \leq W(n)$ where $W$ is the Lambert $W$ function.  It is well known that $W(n) \leq \log n - \log \log n + o(1)$, and thus we get $p \leq \frac{\log n - \log\log n + o(1)}{n}$. Therefore, if we estimate $p$ as 0 we get an error of order $O(\log n / n)$.

    Otherwise, if when querying $\xi_{1, n}$ the policy goes for action $a$, it holds that
    \begin{align*}
        p \geq (1-p)^n \geq 1-np \iff p \geq \frac{1}{n+1}
    \end{align*}
    From now on, we assume this last inequality holds.
    
    Note that for every $1 \leq r ,s  \leq n$ we obtain a constraint of the form
    \begin{align*}
        (1-p)^s \leq p^{r} \iff \frac{\log 1-p}{\log p} \geq \frac{r}{s}
    \end{align*}
    or with the other inequality $\leq$. Since $p\leq \frac{1}{2}$ it holds that $\frac{\log 1-p}{\log p} \leq 1$. By querying with values $r \in \{0,\ldots n\}$ and $s=n$ we obtain an interval estimate of $\alpha = \frac{\log 1-p}{\log p}$ of the form $[l,r]$ with $|r-l| = \frac{1}{n}$. This gives us a guarantee in our approximation of $\alpha$.

    Now we extend this guarantee to an approximation of $p$. Let $f(x) = \frac{\log(1-x)}{\log(x)}$. It can be seen that $f$ is strictly increasing and smooth in the range $\left[\frac{1}{n+1}, \frac{1}{2}\right]$, and thus we let $h = f^{-1}$ which satisfies $h'(y) = \frac{1}{f'(h(x))}$ due to the Inverse Function Theorem. By the Mean Value Theorem, we know that,
    \begin{align*}
        |h(l) - h(r)| \leq |h'(q)| |l-r|
    \end{align*}
    for some $q \in [l, r]$. Since $|l-r| = \frac{1}{n}$, we are only left to upper bound $|h'(q)|$. By the previous remarks $h'(q) = \frac{1}{f'(h(q))}$, and thus we only need to lower bound $f'$ in the range $\left[\frac{1}{n+1}, \frac{1}{2}\right]$.

    By direct computation
    \begin{align*}
        f'(x) = \frac{\frac{-\log(x)}{1-x} - \frac{\log(1-x)}{x}}{\log(x)^2}
    \end{align*}
    Since $\frac{-\log(1-x)}{x} \geq 0$ when $0 \leq x \leq \frac{1}{2}$, we see that
    \begin{align*}
        f'(x) \geq \frac{-\log(x)}{(1-x)\log(x)^2} \geq \frac{1}{-2\log(x)}
    \end{align*}
    where we used that $\frac{1}{1-x} \geq 
    \frac{1}{2}$. Finally, since $x \geq \frac{1}{n+1}$ we see that $\frac{1}{-2\log(x)} \geq \frac{1}{2\log (n+1)}$. Thus, we conclude that
    \begin{align*}
        |h(l)-h(r)| \leq \frac{2\log (n+1)}{n} 
    \end{align*}

    To prove the second statement we apply a similar strategy, though the proof is more cumbersome. When querying $\pi$ with the goal $\xi_{r,s}$ we will deduce an inequality of the form
    \begin{align}\label{eq:delta_cond_for_proof}
        (1-\delta)p^r \leq(1-p)^s
    \end{align}
    or an identical one but with $p$ turned into $1-p$. As before, we start by bounding $p$ from above: if
    \begin{align*}
        (1-\delta)p \leq 1-p 
    \end{align*}
    then, since $\delta \leq \frac{1}{2}$, we can conclude that $p \leq \frac{2}{3}$. Otherwise, we map $p \to 1-p$ and apply the same arguments. This is the only part in which we use our hypothesis over $\delta$.

    As before, we will bound $p$ away from $0$. If
    \begin{align*}
        (1-\delta) p \leq (1-p)^n \leq e^{-pn}
    \end{align*}
    then $(1-\delta)p e^{pn} \leq 1$ and $p \leq \frac{W\left( \frac{n}{(1-\delta)} \right)}{n} = O\left(\frac{\log \left( \frac{n}{1-\delta} \right)}{n}\right)$. Thus, we estimate $p$ as 0.

    Otherwise, if
    \begin{align*}
        p\geq (1-\delta)(1-p)^n \geq (1-\delta) (1-pn)
    \end{align*}
    we conclude that $p \geq \frac{1-\delta}{1+(1-\delta)n}$.

    We will estimate $\alpha = \frac{\log 1-p}{\log p}$. From $p \leq \frac{2}{3}$ we know that $\alpha \leq 3$. Let's write Eq.~\eqref{eq:delta_cond_for_proof} in terms of $\alpha$:
    \begin{align*}
        (1-\delta)p^r \leq (1-p)^s &\iff \log (1-\delta) + r\log p \leq s \log 1-p\\
        &\iff \frac{\log (1-\delta)}{s \log p} \geq \alpha - \frac{r}{s}
    \end{align*}
    Similarly, from the other kind of inequalities we obtain
    \begin{align*}
        (1-\delta) (1-p)^s \leq p^r \iff \frac{\log (1-\delta)}{s \log p} \geq \frac{r}{s} - \alpha 
    \end{align*}

    To make these constraints strong, we note that $\frac{\log (1-\delta)}{s \log p} \leq \frac{|\log (1-\delta)|}{n}$ if $p \leq \frac{2}{3}$ and $s \geq n$. If we query the policy with values of the form $r \in \{0\ldots 3n\}$ and $s = n$ we will partition the interval $[0,3]$ in slices of size $\frac{1}{n}$, and the condition on $s$ we recently stated will hold.

    As estimator, we will consider the first fraction $r_0 / n$ such that $(1-\delta)p^{r_0} \leq (1-p)^n$, if there is such fraction (note that for $r=1$ we know this inequality does not hold). If it exists, then
    \begin{align*}
        \frac{r_0-1}{n} - \frac{|\log (1-\delta)|}{n} \leq \alpha \leq \frac{r_0}{n} + \frac{|\log(1-\delta)|}{n} 
    \end{align*}
    and therefore the estimator for $\alpha$ has error bounded by $\frac{1}{n} + \frac{|\log(1-\delta)|}{n}$. If there is no such fraction we take as estimator $3$, and we can still bound its error using the fact that $\alpha \leq 3$.

    Finally, we apply the same reasoning as before to convert this confidence interval for $\alpha$ into a confidence interval for $p$, but observing that now we work in the range $\left[ \frac{1-\delta}{1 + n(1-\delta)} , \frac{2}{3} \right]$. We can bound the derivative as
    \begin{align*}
        |h'(q)| \leq 3\log\left( \frac{1 + n(1-\delta)}{1-\delta} \right)
    \end{align*}
    and the difference between $p$ and its estimator $\hat{p} = f^{-1}(r_0 / n)$ is
    \begin{align*}
        |p-\hat{p}| \leq \frac{3\log \left( \frac{1+n(1-\delta)}{(1-\delta)} \right) \left( 1 + \frac{|\log(1-\delta)|}{n}\right)}{n}
    \end{align*}
    For $\delta < \frac{1}{2}$, this error grows as $O\left( \frac{\log n}{n} \right)$.
\end{proof}

Note that the constraint $\delta < \frac{1}{2}$ was imposed only to simplify the proof. The technique probably works for any $\delta$, even though the final expression might become worse.

This is a clear improvement over Theorem~\ref{teo:deterministic_policies_induce_transitions}. Not only the convergence of the error is faster, but also the goals on which the agent must be optimal are simpler. In particular, $|\Psi_n| = \Theta\left( 2^{2^{n|\states||\actions|}}\right)$, while $|\Psi_{n, 2}| = \Theta\left( 2^{2n|\states||\actions|}\right)$. Moreover, the set of goals that we employ in the proof of Theorem~\ref{teo:width_2} are actually small (their size is linear in $n$) while those from Theorem~\ref{teo:deterministic_policies_induce_transitions} have exponential size.
 
\section{Conclusion}\label{sec:conclu}

In this work we presented two extensions of the main result from~\cite{richens2025general}, which we recalled in Theorem~\ref{teo:deterministic_policies_induce_transitions}. The proof of this theorem relied on the hypothesis that the agent is deterministic and the world is fully observable.

Our first extension, Theorem~\ref{teo:non-deterministic-policies-induce-transitions}, shows that the result still holds for stochastic policies. More precisely, we showed that from any randomized capable enough policy it is possible to obtain an approximation of the underlying world in which the agent operates. When comparing this result to Theorem~\ref{teo:deterministic_policies_induce_transitions} we see that the quality of the estimator is worse, in particular when looking at the dependence on $\delta$. This is intuitive: a stochastic $\delta$-optimal policy can avoid learning precisely which strategies are optimal by acting on a mixture of them. This allows the policy to ``trade-off'' learning the environment with acting at random.

Our second extension, Theorem~\ref{teo:non-deterministic-policies-induce-transitions-in-partial-obs}, shows that the result still holds for partially observable worlds. This result seems counter-intuitive, since there are worlds which are so unobservable that the agent might never be able to tell where it is. Nonetheless, if we keep the exact framework of~\cite{richens2025general} we observed that the core goals used in the proof of Theorem~\ref{teo:deterministic_policies_induce_transitions} do not rely on moving intelligently around the world, but rather on being able to tell which event is more likely when moving at random. Thus, thanks to this observation we were able to extend the result to the partially observable case.

We note, however, that there is more than one way to extend the framework to the partially observable case. For example, in our proposal goals are still phrased in terms of the real states of the cMDP, even though the agent is only able to look at observations. It would be interesting to understand what happens when an agent is only optimal for attaining ``observation-phrased'' goals. Note that in that case, optimal agents might not need to know the true transition probabilities: for example, if all states output the same observation, then the optimal policies do not depend on the values of $P$, and thus we cannot recover them from the policies. A concrete project is to characterize the set of environments or states which can be approximated given a good enough policy in this context.

Finally, we also improved the notion of ``general agent'' employed in Theorem~\ref{teo:deterministic_policies_induce_transitions} by showing that less general agents already have enough information to reconstruct the environment. In particular, we only require the agent to be optimal on goals with width 2 (i.e. goals which only have 2 subgoals). This subset is not only smaller, but the goals employed in the proof itself are exponentially smaller than the ones used in the original statement.

\bibliographystyle{plain}
\bibliography{bilbio}

@article{kaelbling1998planning,
  title={Planning and acting in partially observable stochastic domains},
  author={Kaelbling, Leslie Pack and Littman, Michael L and Cassandra, Anthony R},
  journal={Artificial intelligence},
  volume={101},
  number={1-2},
  pages={99--134},
  year={1998},
  publisher={Elsevier}
}

@inproceedings{richens2025general,
  title={General agents need world models},
  author={Richens, Jonathan and Everitt, Tom and Abel, David},
  booktitle={Forty-second International Conference on Machine Learning},
  year={2025}
}

@book{johnson1983mental,
  title={Mental models: Towards a cognitive science of language, inference, and consciousness},
  author={Johnson-Laird, Philip Nicholas},
  year={1983},
  publisher={Harvard University Press}
}

@article{ha2018world,
  title={World models},
  author={Ha, David and Schmidhuber, J{\"u}rgen},
  journal={arXiv preprint arXiv:1803.10122},
  volume={2},
  number={3},
  year={2018}
}

@article{hafner2023mastering,
  title={Mastering diverse domains through world models},
  author={Hafner, Danijar and Pasukonis, Jurgis and Ba, Jimmy and Lillicrap, Timothy},
  journal={arXiv preprint arXiv:2301.04104},
  year={2023}
}

@article{wang2023voyager,
  title={Voyager: An open-ended embodied agent with large language models},
  author={Wang, Guanzhi and Xie, Yuqi and Jiang, Yunfan and Mandlekar, Ajay and Xiao, Chaowei and Zhu, Yuke and Fan, Linxi and Anandkumar, Anima},
  journal={arXiv preprint arXiv:2305.16291},
  year={2023}
}

@article{lecun2022path,
  title={A path towards autonomous machine intelligence version 0.9. 2, 2022-06-27},
  author={LeCun, Yann},
  journal={Open Review},
  volume={62},
  number={1},
  pages={1--62},
  year={2022}
}

@article{reed2022generalist,
  title={A generalist agent},
  author={Reed, Scott and Zolna, Konrad and Parisotto, Emilio and Colmenarejo, Sergio Gomez and Novikov, Alexander and Barth-Maron, Gabriel and Gimenez, Mai and Sulsky, Yury and Kay, Jackie and Springenberg, Jost Tobias and others},
  journal={arXiv preprint arXiv:2205.06175},
  year={2022}
}

@inproceedings{zitkovich2023rt,
  title={Rt-2: Vision-language-action models transfer web knowledge to robotic control},
  author={Zitkovich, Brianna and Yu, Tianhe and Xu, Sichun and Xu, Peng and Xiao, Ted and Xia, Fei and Wu, Jialin and Wohlhart, Paul and Welker, Stefan and Wahid, Ayzaan and others},
  booktitle={Conference on Robot Learning},
  pages={2165--2183},
  year={2023},
  organization={PMLR}
}

@article{li2022emergent,
  title={Emergent world representations: Exploring a sequence model trained on a synthetic task},
  author={Li, Kenneth and Hopkins, Aspen K and Bau, David and Vi{\'e}gas, Fernanda and Pfister, Hanspeter and Wattenberg, Martin},
  journal={arXiv preprint arXiv:2210.13382},
  year={2022}
}

@article{sutton1999policy,
  title={Policy gradient methods for reinforcement learning with function approximation},
  author={Sutton, Richard S and McAllester, David and Singh, Satinder and Mansour, Yishay},
  journal={Advances in neural information processing systems},
  volume={12},
  year={1999}
}

@article{rabin1980probabilistic,
  title={Probabilistic algorithm for testing primality},
  author={Rabin, Michael O},
  journal={Journal of number theory},
  volume={12},
  number={1},
  pages={128--138},
  year={1980},
  publisher={Elsevier}
}

@article{agrawal2004primes,
  title={PRIMES is in P},
  author={Agrawal, Manindra and Kayal, Neeraj and Saxena, Nitin},
  journal={Annals of mathematics},
  pages={781--793},
  year={2004},
  publisher={JSTOR}
}

@article{schwartz1980fast,
  title={Fast probabilistic algorithms for verification of polynomial identities},
  author={Schwartz, Jacob T},
  journal={Journal of the ACM (JACM)},
  volume={27},
  number={4},
  pages={701--717},
  year={1980},
  publisher={ACM New York, NY, USA}
}

\appendix

\end{document}